\theoremstyle{plain}
\newtheorem{theorem}{Theorem}[section]
\newtheorem{lemma}[theorem]{Lemma}
\theoremstyle{definition}
\newtheorem{definition}[theorem]{Definition}
\theoremstyle{remark}
\icmltitlerunning{Submission and Formatting Instructions for ICML 2025}
\begin{document}

\twocolumn[
\icmltitle{3D-LMVIC: Learning-based Multi-View Image Compression \\ with 3D Gaussian Geometric Priors}



\icmlsetsymbol{equal}{*}

\begin{icmlauthorlist}
\icmlauthor{Yujun Huang}{}
\icmlauthor{Bin Chen}{}
\icmlauthor{Niu Lian}{}
\icmlauthor{Baoyi An}{}
\icmlauthor{Shu-Tao Xia}{}
\end{icmlauthorlist}



\icmlkeywords{Machine Learning, ICML}

\vskip 0.3in
]



\printAffiliationsAndNotice{\icmlEqualContribution} 

\begin{abstract}
Existing multi-view image compression methods often rely on 2D projection-based similarities between views to estimate disparities. While effective for small disparities, such as those in stereo images, these methods struggle with the more complex disparities encountered in wide-baseline multi-camera systems, commonly found in virtual reality and autonomous driving applications. To address this limitation, we propose 3D-LMVIC, a novel learning-based multi-view image compression framework that leverages 3D Gaussian Splatting to derive geometric priors for accurate disparity estimation. Furthermore, we introduce a depth map compression model to minimize geometric redundancy across views, along with a multi-view sequence ordering strategy based on a defined distance measure between views to enhance correlations between adjacent views. Experimental results demonstrate that 3D-LMVIC achieves superior performance compared to both traditional and learning-based methods. Additionally, it significantly improves disparity estimation accuracy over existing two-view approaches.
\end{abstract}


\section{Introductioin}
\label{sec:introduction}

\begin{figure}[thb]
    \centering
    \includegraphics[width=1.0\linewidth]{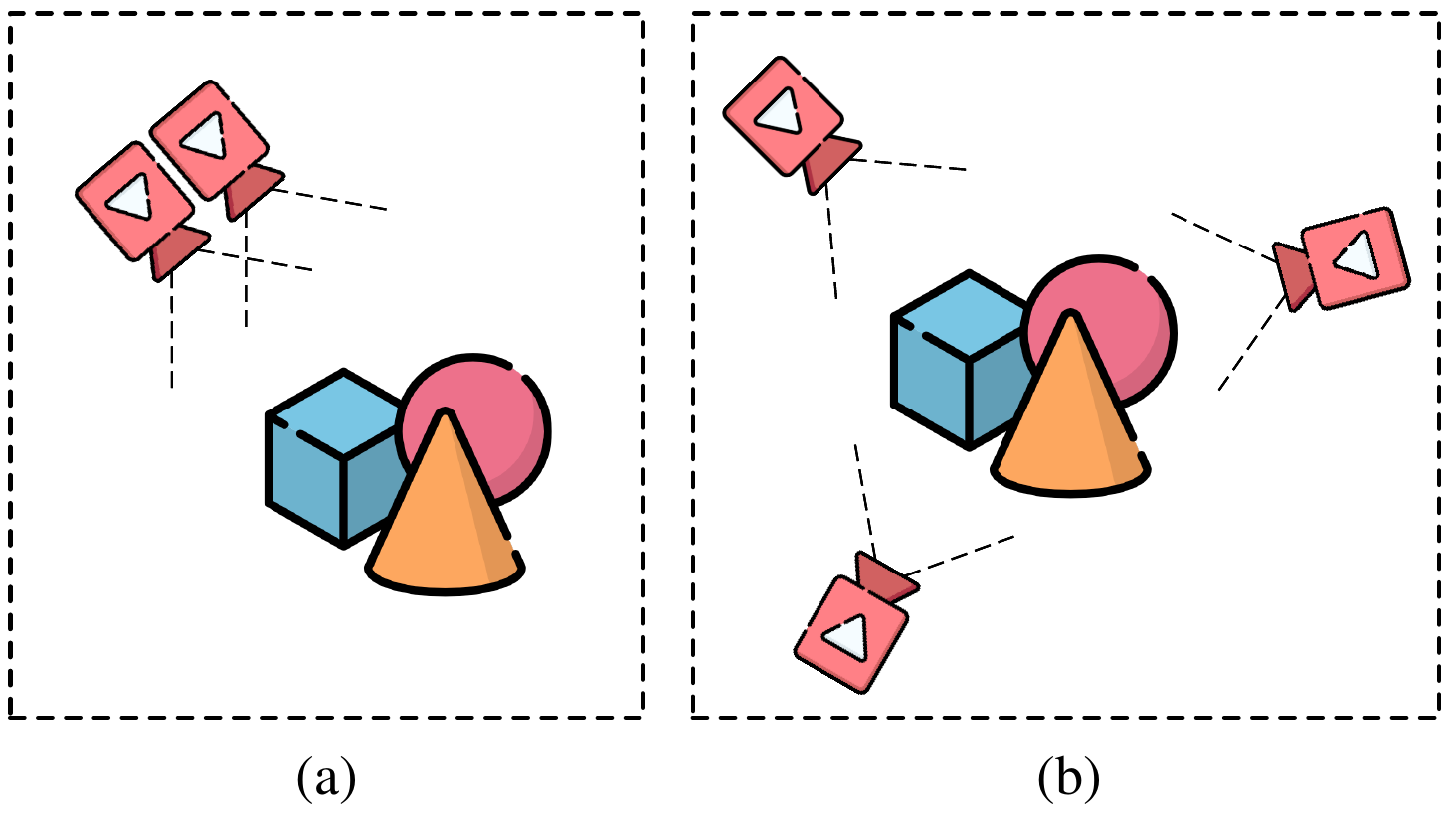}
    \vskip -0.1in
    \caption{Illustrations of camera systems. (a) A stereo camera configuration. (b) A wide-baseline multi-camera configuration.}
    \label{fig:3d-gp-lmvic-stereo_vs_multi}
    \vskip -0.25in
\end{figure}


The rapid advancement of 3D applications has led to an explosion of multi-view image data across various fields, including virtual reality (VR) \citep{anthes2016state}, augmented reality (AR) \citep{schmalstieg2016augmented}, visual simultaneous localization and mapping (vSLAM) \citep{mokssit2023deep}, 3D scene understanding \citep{dai2017scannet}, autonomous driving \citep{chen2017multi}, and medical imaging \citep{hosseinian20153d}. In particular, applications like VR and AR, which rely on high-quality multi-view visual content to create immersive experiences, generate a massive volume of data that poses significant challenges for storage and transmission. This makes the development of efficient compression techniques crucial for managing the increasing data demands in these fields.

As illustrated in \cref{fig:3d-gp-lmvic-stereo_vs_multi}, unlike the commonly studied stereo camera systems, 3D applications often rely on wide-baseline multi-camera systems to capture global scene information \cite{8708933, Yan_2024_CVPR}. In such scenarios, the spatial positions and viewing angles of cameras differ significantly compared to stereo setups, resulting in large disparities between images captured from different views. Existing disparity estimation methods typically rely on finding similar local regions in the image domain to estimate disparities. However, this approach faces significant challenges when dealing with complex and large disparities.


Current multi-view coding standards, such as H.264-based MVC \citep{vetro2011overview} and H.265-based MV-HEVC \citep{7351182}, have been developed to compress multi-view media by extending their respective base standards and exploiting redundancies across multiple views. These standards employ disparity estimation to calculate positional differences of objects between views, aiding in the prediction of pixel values. However, these methods rely on manually designed modules, limiting the system's ability to fully leverage end-to-end optimization.


Learning-based single image compression has seen remarkable advancements \citep{balle2017end, balle2018variational, minnen2018joint}, inspiring extensions of these methods to multi-view image coding \citep{9577926, lei2022deep, zhang2023ldmic, liu2024bidirectional}. A central challenge in these extensions lies in the accurate estimation of disparities across different views. For example, \citet{9577926, deng2023masic} employ a simple 3x3 homography matrix for disparity estimation, which, while efficient, struggles with complex scene disparities. Alternatively, \citet{ayzik2020deep, huang2023learned} utilize patch matching method to align the reference view with the target view. This approach is effective for horizontal or vertical view shifts but falls short when addressing non-rigid deformations caused by view rotations. Similarly, \citet{zhai2022disparity} assume that disparity occurs only along the horizontal axis in their stereo matching method, which suffices for stereo images but is inadequate for more complex view transformations where disparity is not limited to the horizontal axis. Some methods leverage cross-attention mechanisms for implicit alignment \citep{wodlinger2022sasic, zhang2023ldmic, liu2024bidirectional}. For instance, \citet{zhang2023ldmic} enhance the target view’s representation by multiplying its query with the reference view’s key and value, effectively incorporating reference view features into the target view. However, these methods primarily establish correlations between two views by 2D projection similarities, without considering the 3D spatial relationships between the views and the captured objects.


Building on prior investigation, we propose a novel learning-based multi-view image compression framework with 3D Gaussian geometric priors (3D-LMVIC), which employs 3D-GS as a geometric prior to guide disparity estimation between views. Specifically, 3D-GS generates a depth map for each view, providing precise spatial information at the pixel level. This enables accurate correspondence between views, allowing the compression model to effectively fuse features from reference views. Due to positional and angular disparities between views, images generally do not fully overlap, and merging non-overlapping regions may introduce noise. To address this, we design a mask based on the 3D Gaussian geometric prior to identify overlapping regions, ensuring more accurate feature fusion. Additionally, since depth maps are required during decoding, we propose a depth map compression model to efficiently reduce geometric redundancy across views, incorporating a cross-view depth prediction module to capture inter-view geometric correlations. Finally, recognizing the importance of field of view (FoV) overlap in redundancy reduction, we introduce a multi-view sequence ordering method to address the issue of low overlap between adjacent views in unordered sequences. This method defines and proves a distance measure between view pairs to guide the ordering of view sequences.



\begin{itemize}
\item We propose a learning-based multi-view image compression framework with 3D Gaussian geometric priors (3D-LMVIC), which utilizes 3D Gaussian geometric priors for precise disparity estimation between views, thereby enhancing multi-view image compression efficiency. Additionally, we design a mask based on these priors to identify overlapping regions between views, effectively guiding the model to retain useful cross-view information.

\item We also present a depth map compression model aimed at reducing geometric redundancy across views. Additionally, we define and prove a distance measure between views, upon which a multi-view sequence ordering method is proposed to improve the correlation between adjacent views.

\item Experimental results show that our framework surpasses both traditional and learning-based multi-view image coding methods in compression efficiency. Moreover, our disparity estimation method demonstrates greater visual accuracy compared to existing two-view disparity estimation methods.
\end{itemize}

\section{Related Works}
\label{sec:related_works}

\textbf{Single Image Coding.} Traditional image codecs, such as JPEG \citep{wallace1992jpeg}, BPG \citep{bpg2014}, and VVC \citep{bross2021overview}, employ manually designed modules like DCT, block-based coding, and quadtree plus binary tree partitioning to balance compression and visual quality. These methods, however, do not achieve end-to-end joint optimization, limiting their performance.



In recent years, learning-based image compression methods have integrated autoencoders with differentiable entropy models to enable end-to-end optimization of rate-distortion loss. Early works, such as \citet{balle2017end, balle2018variational}, introduced generalized divisive normalization (GDN) \citep{balle2016density} and proposed factorized and hyperprior entropy models. Subsequent research \citep{minnen2018joint, he2021checkerboard, jiang2023mlic} incorporated autoregressive structures into entropy models, resulting in more accurate probability predictions. These advancements have laid the foundation for learning-based multi-view image coding.

\textbf{Multi-view Image Coding.} Traditional multi-view image codecs, such as MVC \citep{vetro2011overview} and MV-HEVC \citep{7351182}, extend H.264 and H.265, respectively, by incorporating inter-view correlation modeling to eliminate redundant information between different views. However, these modules are manually designed, potentially limiting their ability to fully exploit cross-view information.

\begin{figure*}[htb]
    \centering
    \subfigure[Overview of the 3D-LMVIC pipeline.]{     \centering
        \includegraphics[width=0.60\textwidth]{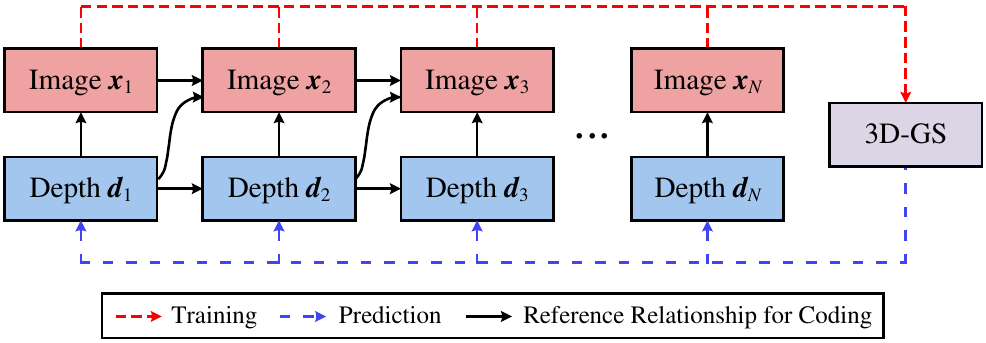}
        \label{fig:3D_LMVIC_overall_pipe}
    }
    \hfill
    \subfigure[Depth-based disparity estimation process.]{
        \centering
        \includegraphics[width=0.34\textwidth]{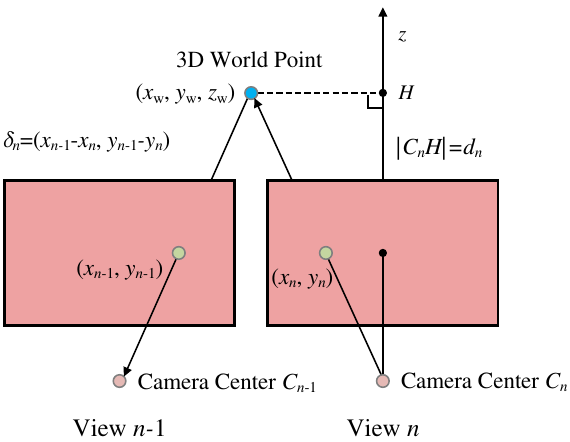}
        \label{fig:disp_esti}
    }
    \vskip -0.1in
    \caption{Overall Pipeline.}
    \vskip -0.2in
    \label{fig:3d-gp-lmvic-pipeline_and_illust_of_disp_est}
\end{figure*}

Learning-based multi-view image coding primarily focuses on stereo image coding \citep{9577926, lei2022deep, wodlinger2022sasic, zhai2022disparity, deng2023masic, liu2024bidirectional} and distributed image coding \citep{ayzik2020deep, huang2023learned, zhang2023ldmic}. These methods either rely on finding explicit pixel coordinate correspondences between views or use attention-based implicit correspondence modeling to capture inter-view correlations. However, they model inter-view correlations based solely on two-dimensional view images, which may not fully reflect the correspondences in the original three-dimensional space.

\textbf{3D Gaussian Splatting.} 3D Gaussian Splatting \citep{kerbl20233d, hamdi_2024_CVPR} introduces a differentiable point-based rendering technique that represents 3D points as Gaussian functions (mean, variance, opacity, color) and projects these 3D Gaussians onto a view to form an image. This differentiable point-based rendering function allows for the backward update of the attributes of the 3D Gaussians, ensuring that their geometrical and textural properties match the original 3D scene. This approach inspired us to utilize 3D Gaussian Splatting to obtain geometric priors of the original 3D scene, aiding in the task of multi-view image compression.

\section{Proposed Method}
\label{sec:proposed_method}
\cref{fig:3D_LMVIC_overall_pipe} shows the overall pipeline of 3D-LMVIC. Given a set of multi-view image sequences \(\mathcal{X}=\{\boldsymbol{x}_1, \boldsymbol{x}_2, \boldsymbol{x}_3, \cdots, \boldsymbol{x}_N\}\), a 3D-GS is trained to estimate depth map \(\boldsymbol{d}_n\) for each image \(\boldsymbol{x}_n\). Both \(\boldsymbol{x}_n\) and \(\boldsymbol{d}_n\) are compressed, with the coding reference relationships indicated by black solid arrows in the figure. Prior to compressing the image \(\boldsymbol{x}_n\), it is necessary to compress \(\boldsymbol{x}_{n-1}\), \(\boldsymbol{d}_{n-1}\), and \(\boldsymbol{d}_n\). The disparity relationship between the \((n-1)\)-th and \(n\)-th views is inferred from the reconstructed depth maps \(\boldsymbol{\hat{d}}_{n-1}\) and \(\boldsymbol{\hat{d}}_n\). Subsequently, based on the estimated disparity relationship, as well as the extracted features of the \((n-1)\)-th view, \(\boldsymbol{x}_n\) is compressed. When compressing the depth map \(\boldsymbol{d}_n\), the model employs the predicted depth map derived from \(\boldsymbol{\hat{d}}_{n-1}\) as a reference. The same neural network architecture and model parameters are used consistently across all views for both image compression and depth map compression. 

The remainder of this section is structured as follows: Section~\ref{subsec:3dgs_dep_and_disp_estim} elaborates on the method for depth map estimation for a given view using the 3D-GS and the estimation of inter-view disparities. Section~\ref{subsec:compress_framework_for_img_and_dep} covers the compression model for both images and depth maps, as well as the multi-view sequence ordering method.

\subsection{3D-GS Based Depth and Disparity Estimation}
\label{subsec:3dgs_dep_and_disp_estim}

\subsubsection{Depth Estimation}
For an image \(\boldsymbol{x}_n \in \mathbb{R}^{W \times H \times 3}\) with spatial dimensions \(W\) and \(H\), we aim to derive a depth map \(\boldsymbol{d}_n \in \mathbb{R}^{W \times H}\), representing the \(z\)-axis coordinates of each pixel's corresponding 3D world point in the camera coordinate system. This depth map facilitates the estimation of disparities between different views.

In the context of the 3D-GS framework, consider a set of \(M\) ordered 3D points projected along a ray from the camera through a pixel. The rendered pixel color \(c\) can be expressed as:

\begin{equation}
    \label{eq:rendering}
    c = \sum_{i=1}^{M} T_i\alpha_i c_i, \hspace{0.5em} \text{with} \hspace{0.5em} T_i = \prod_{j=1}^{i-1}(1 - \alpha_j).
\end{equation}

Here, \(c_i\) and \(\alpha_i\) represent the color and opacity (density) of the point, respectively, derived from the point's 3D Gaussian properties. The factor \(T_i\) denotes the transmittance along the ray, indicating the fraction of light reaching the camera without being occluded.

In \eqref{eq:rendering}, \(T_i\) serves as a weight for the contribution of each point's color to the pixel's final color, diminishing from 1 to 0 as \(i\) increases due to cumulative absorption. To estimate the depth of a pixel \(d\), we adopt a median depth estimation approach. Specifically, the depth is determined as the depth of the first point where \(T_i\) drops below 0.5:

\begin{equation}
    \label{eq:depth_prediction}
    d = z_{i^*}, \hspace{0.5em} \text{where} \hspace{0.5em} i^* = \min \{ i \mid T_i < 0.5 \}.
\end{equation}

It is worth noting that the original 3D-GS \citep{kerbl20233d} employs a weighted averaging approach, using \(T_i\alpha_i\) as the weight for each 3D Gaussian along the ray to compute depth. In contrast, alignment experiments in \cref{subsec:alignment_experiments} demonstrate that the median depth estimation approach achieves better alignment performance.

\subsubsection{Disparity Estimation}

Next, we aim to estimate the disparity \(\boldsymbol{\Delta}_n \in \mathbb{R}^{W \times H \times 2}\) between views based on the estimated depth map. This disparity represents the pixel-wise shift of each 3D world point's projection across different views. Disparity estimation captures the geometric relationships between views, facilitating the modeling of inter-view correlations.

\cref{fig:disp_esti} illustrates the depth-based disparity estimation. To estimate the disparity, a pixel \((x_n, y_n)\) in the \(n\)-th view is back-projected into 3D space using the depth \(d_n\) to obtain the world coordinates \((x_\text{w}, y_\text{w}, z_\text{w})\). This 3D world point is then projected into the \((n-1)\)-th view to obtain the corresponding pixel coordinates \((x_{n-1}, y_{n-1})\). The transformations involved are as follows:

\begin{align}
\begin{aligned}
\begin{bmatrix} x_\text{w} \\ y_\text{w} \\ z_\text{w} \\ 1 \end{bmatrix} &= V_n^{-1} \cdot \text{aug} \left( K^{-1} d_n \begin{bmatrix} x_n \\ y_n \\ 1 \end{bmatrix} \right), \\ 
d'_{n-1} \begin{bmatrix} x_{n-1} \\ y_{n-1} \\ 1 \end{bmatrix} &= K \cdot \text{deaug} \left( V_{n-1} \begin{bmatrix} x_\text{w} \\ y_\text{w} \\ z_\text{w} \\ 1 \end{bmatrix} \right), \label{eq:disparity_estimation}
\end{aligned}
\end{align}

where \(K \in \mathbb{R}^{3 \times 3}\) denotes the camera intrinsic matrix, and \(V_n, V_{n-1} \in \mathbb{R}^{4 \times 4}\) are the extrinsic matrices corresponding to the \(n\)-th and \((n-1)\)-th views, respectively. The camera parameters are calibrated using SfM \citep{schonberger2016structure}. \(d'_{n-1}\) represents the depth of the 3D world point in the camera coordinate system of the \((n-1)\)-th view. \(\text{aug}\) denotes the operation of augmenting a vector by adding an additional dimension with a value of 1 as its final element. Conversely, \(\text{deaug}\) refers to the operation of reducing a vector by removing its last dimension. The resulting disparity \(\delta_n = (x_{n-1} - x_n, y_{n-1} - y_n)\) for each pixel is then compiled into the disparity map \(\boldsymbol{\Delta}_n\).

Finally, we define a mask \(\boldsymbol{x}_{n,\text{m}} \in \mathbb{R}^{W \times H}\) to determine whether the disparity estimation is meaningful, i.e., whether the information from the reference pixel is relevant or merely noise. The mask's criteria are as follows:

1. The projected pixel must reside within the valid image region in the \((n-1)\)-th view.

2. The corresponding 3D world point must lie in the positive \(z\)-half-space of the \((n-1)\)-th view's coordinate system.

3. No occlusion must exist along the line of sight, i.e., \(d'_{n-1}\) from \eqref{eq:disparity_estimation} must be less than the estimated depth along the ray in the \((n-1)\)-th view.

This can be formulated as:
\vskip -0.2in
\begin{equation}
\boldsymbol{x}_{n,\text{m}}[i, j] = 
\begin{cases} 
1 & \text{if} \hspace{0.5em} 0 < \boldsymbol{\Delta}_n[i, j, 0] + i + 0.5 < W \\
& \hspace{0.0em} \text{and} \hspace{0.5em} 0 < \boldsymbol{\Delta}_n[i, j, 1] + j + 0.5 < H \\
& \hspace{0.0em} \text{and} \hspace{0.5em} 0 < \boldsymbol{d}'_{n-1}[i, j] < \text{Warp}(\boldsymbol{d}_{n-1}, \boldsymbol{\Delta}_n)[i, j], \\
0 & \text{otherwise},
\end{cases}
\label{eq:image_mask}
\end{equation}

where \(\boldsymbol{d}'_{n-1} \in \mathbb{R}^{W \times H}\) represents the tensor containing the depth values \(d'_{n-1}\) for each pixel, and \(\text{Warp}(\cdot, \cdot)\) denotes the warping operation based on the given disparity. Appendix~\ref{appendix:disp_and_mask_estimation_alg} outlines the algorithmic process for disparity and mask estimation.

\begin{figure*}[t]
    \centering
    \includegraphics[width=1.0\linewidth]{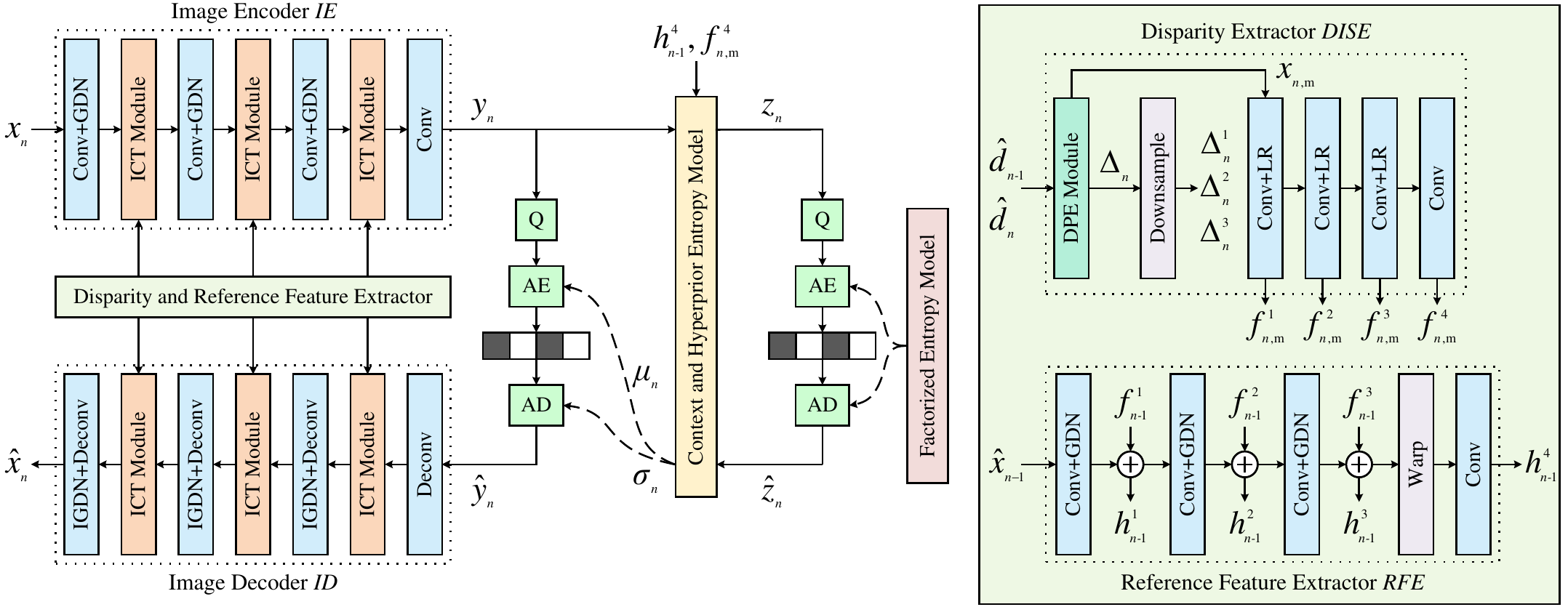}
    \vskip -0.1in
    \caption{The architecture of the proposed image compression model. 'LR' represents the Leaky ReLU activation function, 'Q' denotes the quantization operation, and 'AE'/'AD' refer to the arithmetic encoder/decoder, respectively.}
    \label{fig:img_compression_model}
    \vspace{-0.4cm}
\end{figure*}

\subsection{Compression Framework for Images and Depth Maps}
\label{subsec:compress_framework_for_img_and_dep}

\subsubsection{Image Compression Model}
\label{subsubsec:img_compress_model}

As shown in \figureautorefname~\ref{fig:img_compression_model}, the disparity extractor \(DISE\) utilizes reconstructed depth maps \(\boldsymbol{\hat{d}}_{n-1}\) and \(\boldsymbol{\hat{d}}_n\) to extract multi-scale disparities and feature masks. The reference feature extractor \(RFE\) generates multi-scale reference features from the reconstructed image \(\boldsymbol{\hat{x}}_{n-1}\) and its intermediate reconstruction features \(\{\boldsymbol{f}^i_{n-1} \mid i=1,2,3\}\). Subsequently, the image encoder \(IE\) and decoder \(ID\) incorporate the reference features, aligned using the extracted disparities, into the backbone network. This process is formalized as:
\begin{equation}
\begin{aligned}
    \boldsymbol{y}_n &= IE(\boldsymbol{x}_n, DISE(\boldsymbol{\hat{d}}_{n-1}, \boldsymbol{\hat{d}}_n), RFE(\boldsymbol{\hat{x}}_{n-1}, \{\boldsymbol{f}^i_{n-1}\})), \\
    \boldsymbol{\hat{y}}_n &= Q(\boldsymbol{y}_n), \\
    \boldsymbol{\hat{x}}_n &= ID(\boldsymbol{\hat{y}}_n, DISE(\boldsymbol{\hat{d}}_{n-1}, \boldsymbol{\hat{d}}_n), RFE(\boldsymbol{\hat{x}}_{n-1}, \{\boldsymbol{f}^i_{n-1}\})).
\end{aligned}
\end{equation}

For entropy coding, we utilize the hyperprior entropy model \citep{balle2018variational} and the quadtree partition-based entropy model (QPEM) \citep{10204171}. The hyperprior entropy model transforms \(\boldsymbol{y}_n\) into a hyperprior representation \(\boldsymbol{z}_n\). The quantized hyperprior representation \(\boldsymbol{\hat{z}}_n\) is then used to accurately model the probability distribution of \(\boldsymbol{\hat{y}}_n\). The conditional probability distribution \(p_{\boldsymbol{\hat{y}}_n|\boldsymbol{\hat{z}}_n}\) is defined as:
\begin{equation}
p_{\boldsymbol{\hat{y}}_n|\boldsymbol{\hat{z}}_n}(\boldsymbol{\hat{y}}_n|\boldsymbol{\hat{z}}_n) \sim \mathcal{N}(\boldsymbol{\mu}_{n}, \boldsymbol{\sigma}_{n}^{2}).
\end{equation}

\textbf{Disparity extractor.} As illustrated in \figureautorefname~\ref{fig:img_compression_model}, we firstly employ the disparity estimation (DPE) module to derive the disparity map \(\boldsymbol{\Delta}_n\) and the corresponding mask \(\boldsymbol{x}_{n,\text{m}}\), using \(\boldsymbol{\hat{d}}_{n-1}\) and \(\boldsymbol{\hat{d}}_n\), following the method outlined in Section \ref{subsec:3dgs_dep_and_disp_estim}. Subsequently, \(\boldsymbol{\Delta}_n\) undergoes a series of downsampling operations to produce multi-scale disparity maps \(\{\boldsymbol{\Delta}^i_n \mid i=1,2,3\}\), which will facilitate multi-scale feature alignment. The mask \(\boldsymbol{x}_{n,\text{m}}\) is further processed by the disparity mask extractor to extract feature masks \(\{\boldsymbol{f}^i_{n,\text{m}} \mid i=1,2,3,4\}\).

\textbf{Reference feature extractor.} The reference feature extractor takes \(\boldsymbol{\hat{x}}_{n-1}\), \(\{\boldsymbol{f}^i_{n-1} \mid i=1,2,3\}\), and \(\boldsymbol{\Delta}^3_n\) as inputs to extract multi-scale reference features \(\{\boldsymbol{h}^i_{n-1} \mid i=1,2,3,4\}\), as shown in \figureautorefname~\ref{fig:img_compression_model}.

\begin{figure}[t]
    \centering
    \includegraphics[width=\linewidth]{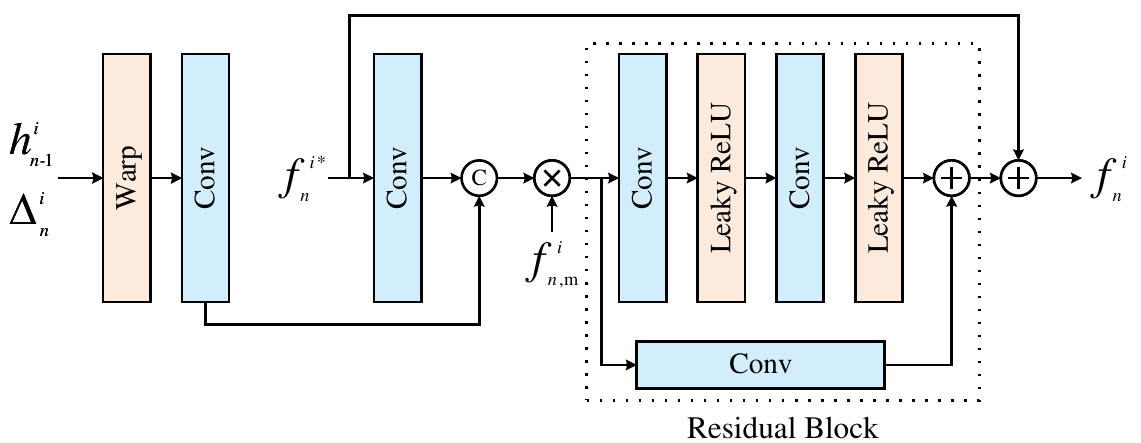}
    \vskip -0.1in
    \caption{Illustration of the proposed image context transfer module.}
    \label{fig:ict_module}
    \vskip -0.2in
\end{figure}

\textbf{Image context transfer module.} To incorporate the reference feature \(\{\boldsymbol{h}^i_{n-1} \mid i=1,2,3\}\) obtained from the \((n-1)\)-th view into the image backbone encoder and decoder, enhancing feature representation, we introduce the image context transfer (ICT) module. As depicted in \figureautorefname~\ref{fig:ict_module}, the module enhances the input feature \(\boldsymbol{f}^{i^*}_{n}\) from the backbone network by leveraging the aligned reference feature \(\boldsymbol{h}^i_{n-1}\) via \(\boldsymbol{\Delta}^i_n\). By applying feature masks, the module filters relevant information and refines the features, ultimately producing the output feature \(\boldsymbol{f}^i_{n}\) through a residual enhancement process.

\subsubsection{Depth Map Compression Model}
\label{subsubsec:dep_map_compress_model}

The compression and decompression of the depth map \(\boldsymbol{d}_n\) leverage \(\boldsymbol{\hat{d}}_{n-1}\) as a reference. Initially, \(\boldsymbol{\hat{d}}_{n-1}\) is processed by the depth prediction extractor \(DEPE\), which generates multi-scale depth prediction features and corresponding feature masks. Subsequently, the depth encoder \(DE\) and decoder \(DD\) integrate these extracted features and masks into the backbone network. This process is formalized as: 
\begin{equation}
\begin{aligned}
    \boldsymbol{y}_{d_n} &= DE(\boldsymbol{d}_n, DEPE(\boldsymbol{\hat{d}}_{n-1})), \\
    \boldsymbol{\hat{y}}_{d_n} &= Q(\boldsymbol{y}_{d_n}), \\
    \boldsymbol{\hat{d}}_n &= DD(\boldsymbol{\hat{y}}_{d_n}, DEPE(\boldsymbol{\hat{d}}_{n-1})).
\end{aligned}
\end{equation}

The entropy coding scheme incorporates both the hyperprior entropy model and the QPEM. The latent representation \(\boldsymbol{y}_{d_n}\) is transformed into a hyperprior representation \(\boldsymbol{z}_{d_n}\) using the hyperprior entropy model. Similar to the image compression model, the quantized hyperprior representation \(\boldsymbol{\hat{z}}_{d_n}\) is used to model the probability distribution of \(\boldsymbol{\hat{y}}_{d_n}\). Additional details about the depth map compression model are provided in Appendix~\ref{appendix:supplement_inform_for_the_dep_map_compress_model}.

\begin{figure*}[t]
    \centering
    \includegraphics[width=1.0\linewidth]{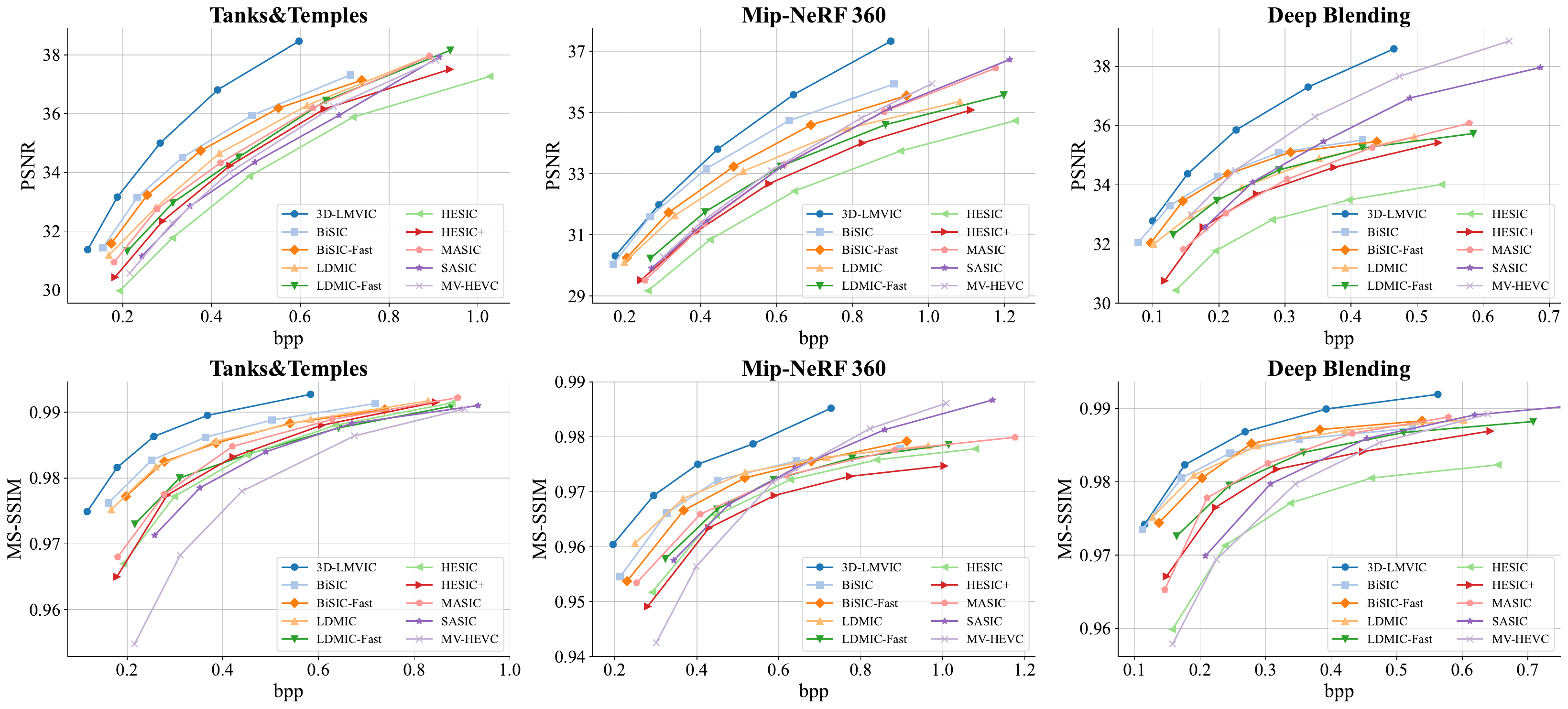}
    \vskip -0.0in
    \caption{Rate-distortion curves of the proposed method compared with baselines.}
    \label{fig:main_rd_experiments}
    \vskip -0.1in
\end{figure*}

\subsubsection{Multi-View Sequence Ordering}
\label{subsubsec:multi_view_sequence_ordering}

Given the significant impact of FoV overlap between adjacent views on inter-view correlations, we propose a multi-view sequence ordering method to alleviate the issue of insufficient overlap in unordered sequences. We define a distance metric to evaluate inter-view overlap and employ a greedy algorithm to find an improved sequence.

In \eqref{eq:disparity_estimation}, if \(V_{n-1}V_n^{-1}=I\), then \((x_n, y_n)=(x_{n-1}, y_{n-1})\). This indicates that each pixel in the \(n\)-th view lies within the valid image area of the \((n-1)\)-th view, indicating high overlap. Thus, for any two views \(i\) and \(j\), we measure overlap by the proximity of \(V_iV_j^{-1}\) to the identity matrix: 

\begin{equation}
\mathcal{D}_\mathcal{V}(i,j) = \| V_iV_j^{-1} - I \|.
\end{equation}

Appendix~\ref{appendix:proof_of_distance_measure} proves that \(\mathcal{D}_\mathcal{V}(i,j)\) is a distance metric for both the 2-norm and Frobenius norm. The Frobenius norm is utilized in our experiments. After determining pairwise distances, a greedy algorithm is employed, starting from an initial sequence with only one view, iteratively selecting the view closest to the last view in the sequence.

\subsubsection{Training Loss}
\label{subsubsec:training_loss}

For each training step, a randomly selected subsequence of length 4 from a multi-view sequence serves as the training sample. The training loss comprises the distortion losses for both the reconstructed image and depth map, as well as the estimated compression rates for the encoded image and depth map:

\begin{equation}
\begin{aligned}
L = \sum_{n=s}^{s+3} w_{n-s+1} \Big[ \lambda_\text{img} D(\boldsymbol{x}_n, \boldsymbol{\hat{x}}_n) + \lambda_\text{dep} \text{MSE}(\boldsymbol{d}_n, \boldsymbol{\hat{d}}_n) \\ + R(\boldsymbol{\hat{y}}_n) + R(\boldsymbol{\hat{z}}_n) + R(\boldsymbol{\hat{y}}_{d_n}) + R(\boldsymbol{\hat{z}}_{d_n}) \Big],
\end{aligned}
\end{equation}

where \(D(\cdot, \cdot)\) denotes the distortion, \(\text{MSE}(\cdot, \cdot)\) represents the mean squared error (MSE), and \(R(\cdot)\) indicates the estimated compression rates. The hyperparameters \(\lambda_\text{img}\) and \(\lambda_\text{dep}\) control the contributions of the image and depth map distortion losses, respectively. The weights \(\{w_i \mid i=1,2,3,4\}\) adjust the influence of each view on the overall training loss.
\begin{table*}[t]
  \caption{BDBR comparison of different methods relative to MV-HEVC.}
  \label{table:bdbr_performance}
  \centering
  \begin{tabular}{*{7}{c}}
    \toprule
    \multirow{2}*{Methods} & \multicolumn{2}{c}{Tanks\&Temples} & \multicolumn{2}{c}{Mip-NeRF\ 360} & \multicolumn{2}{c}{Deep Blending}  \\
    \cmidrule(lr){2-3} \cmidrule(lr){4-5} \cmidrule(lr){6-7}& PSNR & MS-SSIM & PSNR & MS-SSIM & PSNR & MS-SSIM\\
    \midrule
    HAC & 636.81\% & 350.72\% & 374.20\% & 294.42\% & 673.57\% & 418.85\% \\
    HESIC  & 12.66\% & -26.29\% & 28.41\% & -6.18\% & 85.38\% & 3.91\% \\
    HESIC+ & -4.85\% & -30.42\% & 9.48\% & -5.11\% & 32.5\% & -19.14\% \\
    MASIC  & -12.57\% & -34.19\% & 3.26\% & -9.11\%  & 43.6\% & -9.33\% \\
    SASIC  & 3.39\% & -18.59\% & 2.40\% & -3.70\% & 24.64\% & -9.48\% \\
    LDMIC-Fast  & -8.56\% & -27.76\% & 1.72\% & -6.21\%  & 24.25\% & -23.31\% \\   
    LDMIC  & -16.27\% & -44.33\% & -13.12\% & -25.39\% & 16.88\% & -41.94\% \\
    BiSIC-Fast  & -26.59\% & -42.93\% & -20.61\% & -23.23\% & -8.24\% & -41.80\% \\
    BiSIC  & -30.89\% & -49.96\% & -29.87\% & -30.75\% & -15.46\% & -48.47\% \\
    3D-LMVIC  & \textbf{-47.48\%} & \textbf{-63.69\%} & \textbf{-34.69\%} & \textbf{-40.25\%} & \textbf{-27.31\%} & \textbf{-54.15\%} \\
    \bottomrule
  \end{tabular}
  \vspace{-0.4cm}
\end{table*}

\begin{table*}[t]
  \caption{Average alignment quality (PSNR, MS-SSIM) of different alignment methods on the Train scene of the Tanks\&Temples dataset.}
  \label{table:alignment_quality_and_runtime}
  \centering
  \resizebox{\textwidth}{!}{%
  \begin{tabular}{*{10}{c}}
    \toprule
    \multirow{2}*{Metrics} & \multicolumn{9}{c}{Methods} \\
    \cmidrule(lr){2-10}& HT & PM & SPyNet & PWC-Net & FlowFormer++ & 3D-GS & COLMAP & MVSFormer++ & Proposed \\
    \midrule
    PSNR & 15.16 & 17.94 & 16.12 & 17.59 & 18.08 & 17.36 & 14.32 & 15.31 & \textbf{18.14} \\
    MS-SSIM & 0.5435 & 0.7633 & 0.6289 & 0.7707 & 0.7863 & 0.7410 & 0.7446 & 0.5544 & \textbf{0.8053} \\
    \bottomrule
  \end{tabular}
  }
  \vspace{-0.5cm}
\end{table*}

\section{Experiments}
\label{sec:experiments}

\subsection{Experimental Setup}
\label{subsec:experimental_setup}
\textbf{Datasets.} We evaluate our model on three multi-view image datasets: Tanks\&Temples \citep{Knapitsch2017}, Mip-NeRF\ 360 \citep{barron2022mipnerf360}, and Deep Blending \citep{10.1145/3272127.3275084}. These datasets feature a wide variety of indoor and outdoor scenes, each containing dozens to over a thousand images captured from different views. Further details on the datasets are provided in Appendix~\ref{appendix:experiment_details}.

\textbf{Benchmarks.} We compare our approach against several baselines, including traditional multi-view codec: MV-HEVC \citep{7351182}; learning-based multi-view image codecs: two variants of HESIC \citep{9577926}, MASIC \citep{deng2023masic}, SASIC \citep{wodlinger2022sasic}, two variants of LDMIC \citep{zhang2023ldmic}, and two variants of BiSIC \citep{liu2024bidirectional}; as well as the 3D-GS compression method: HAC \citep{chen2024hac}. Further details on the baseline configurations are provided in Appendix~\ref{appendix:experiment_details}.

\textbf{Metrics.} Image reconstruction quality is measured using peak signal-to-noise ratio (PSNR) and multi-scale structural similarity index (MS-SSIM) \citep{wang2003multiscale}. Bitrate is expressed in bits per pixel (bpp). In addition to plotting RD curves, the Bjøntegaard Delta bitrate (BDBR) is calculated to quantify the average bitrate savings across varying reconstruction qualities. Lower BDBR values indicate better performance.

\textbf{Implementation Details.} The model was trained using five different configurations of (\(\lambda_\text{img}\), \(\lambda_\text{dep}\)): \(\left((256, 64), (512, 128), (1024, 128), (2048, 128), (4096, 128)\right)\) when the image distortion loss is MSE, and \(\left((8, 64), (16, 128), (32, 128), (64, 128), (128, 128)\right)\) when using MS-SSIM. The weights \(w_i\) for four consecutive views were set to \((0.5, 1.2, 0.5, 0.9)\) as referenced from \citet{10204171}. The model was trained for 300 epochs with an initial learning rate of \(10^{-4}\), which was progressively decayed by a factor of 0.5 every 60 epochs.

\subsection{Experimental Results}
\label{subsec:experimental_results}
\textbf{Coding performance.} \figureautorefname~\ref{fig:main_rd_experiments} presents the rate-distortion curves of the compared methods, while \tableautorefname~\ref{table:bdbr_performance} summarizes the BDBR of each codec relative to MV-HEVC. Across the three datasets, the  proposed 3D-LMVIC consistently outperforms the baselines in both PSNR and MS-SSIM, demonstrating its effectiveness in reducing inter-view redundancy. For instance, on the Tanks\&Temples dataset, 3D-LMVIC achieves a BDBR reduction of 16.59\% for PSNR and 13.73\% for MS-SSIM compared to BiSIC. The BDBR of HAC is relatively higher, likely due to the inclusion of 3D scene information in addition to 2D image representations. Appendix~\ref{appendix:visualization} provides examples of visual comparisons. Appendix~\ref{appendix:complexity} presents an analysis of computational complexity. Appendix~\ref{appendix:supplementary_coding_performance} includes supplementary experiments on coding performance.

\begin{figure*}[t]
    \centering
    \newlength{\mylengtha}
    \setlength{\mylengtha}{\dimexpr\textwidth*100/410}
    \begin{minipage}[t]{\mylengtha}
		\centering
            \vspace{3.6pt}
		\includegraphics[width=\linewidth]{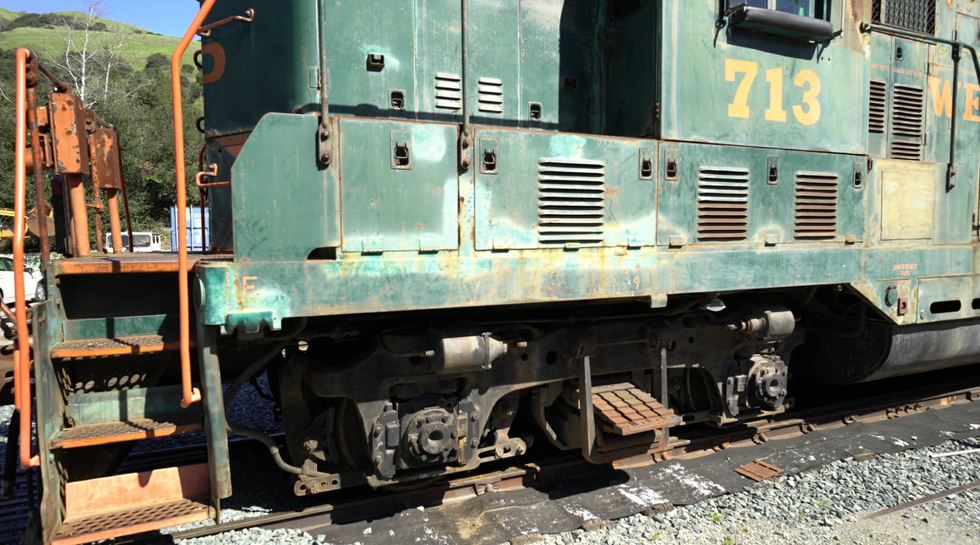}

            \vspace{-3pt}
            \centerline{\footnotesize{Reference View}}
		\centerline{\footnotesize{\ }}	
            \vspace{4.1pt}
            \includegraphics[width=\linewidth]{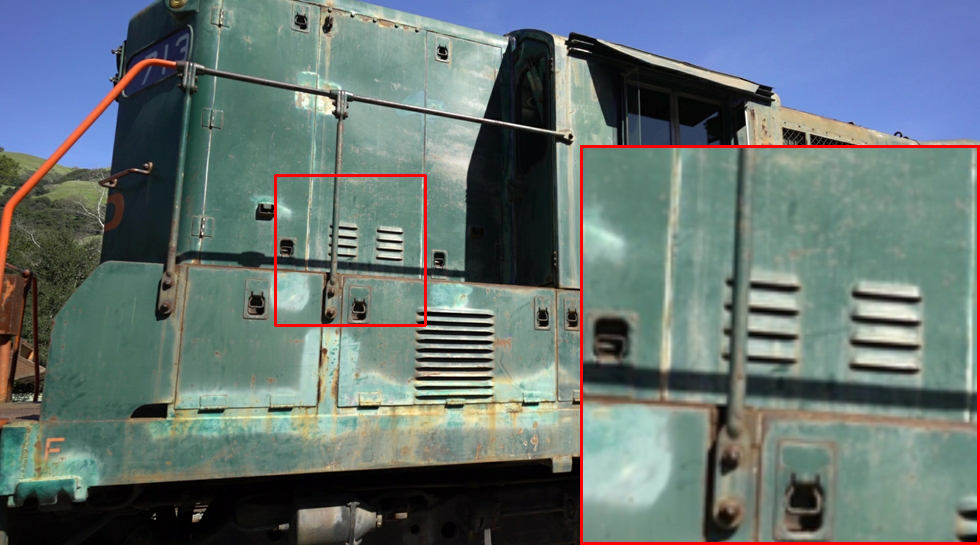}
            
            \vspace{-3pt}
		\centerline{\footnotesize{Target View}}
    \end{minipage}%
    \hfill
    \begin{minipage}[t]{\mylengtha}
		\centering
        \vspace{3.6pt}
		\includegraphics[width=\linewidth]{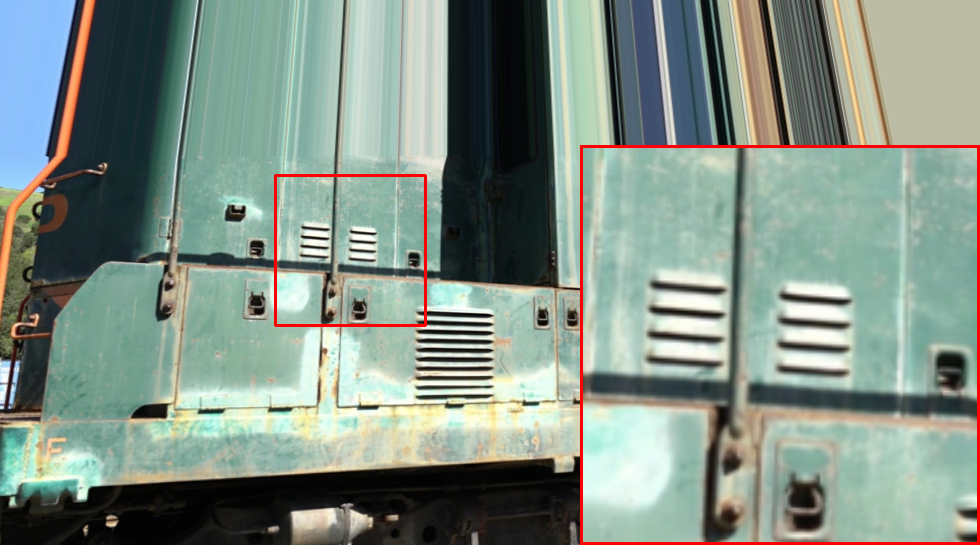}
            
        \vspace{-3pt}
        \centerline{\footnotesize{HT}}
        \centerline{\footnotesize{12.82/0.6065}}	
        \vspace{4pt}
        \includegraphics[width=\linewidth]{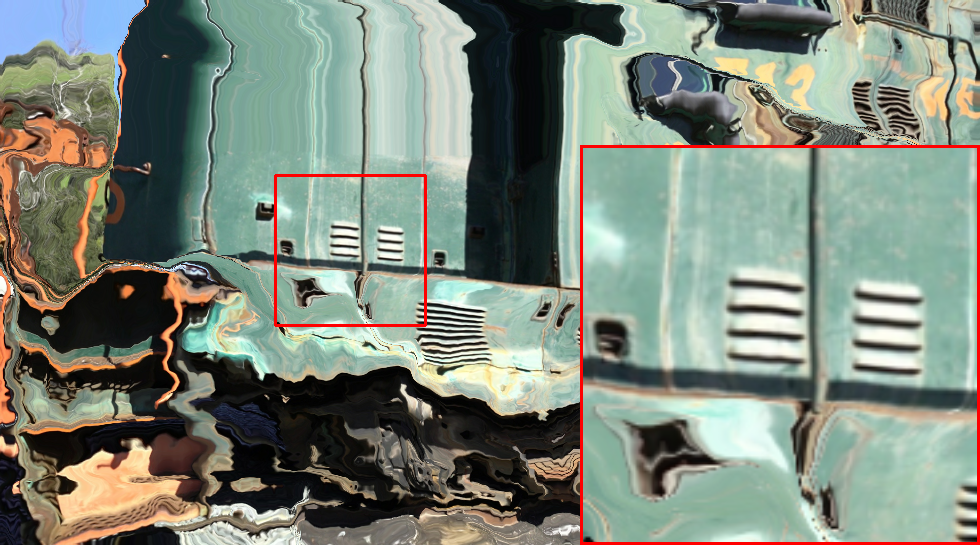}
        
        \vspace{-3pt}
        \centerline{\footnotesize{PWC-Net}}
        \centerline{\footnotesize{9.65/0.2840}}
    \end{minipage}%
    \hfill
    \begin{minipage}[t]{\mylengtha}
		\centering
        \vspace{3.6pt}
		\includegraphics[width=\linewidth]{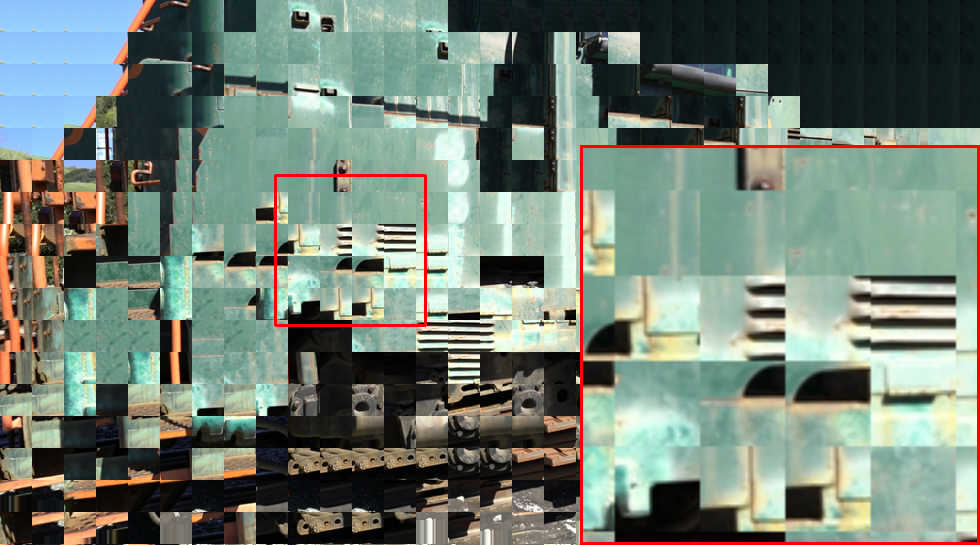}
            
        \vspace{-3pt}
        \centerline{\footnotesize{PM}}
        \centerline{\footnotesize{10.05/0.4078}}
        \vspace{4pt}
        \includegraphics[width=\linewidth]{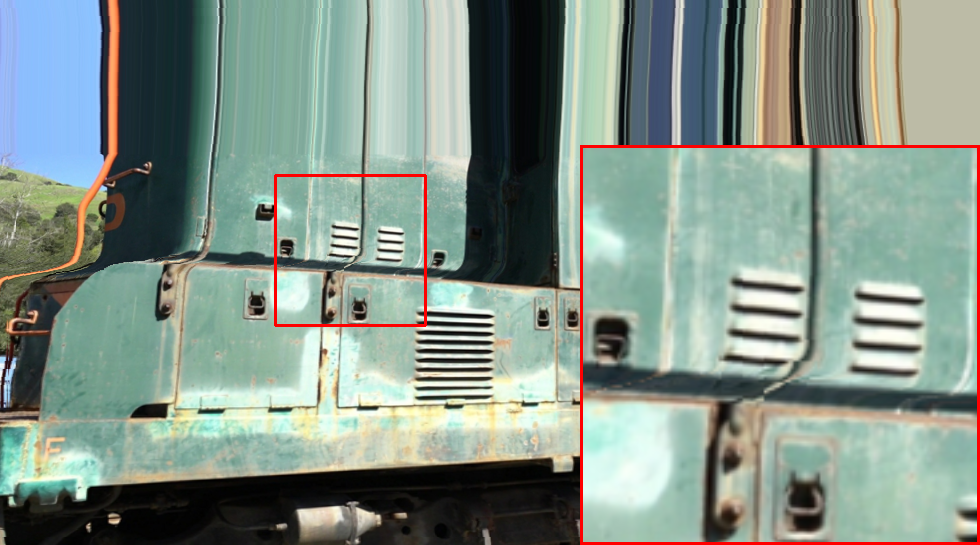}
        
        \vspace{-3pt}
        \centerline{\footnotesize{FlowFormer++}}
        \centerline{\footnotesize{13.94/0.7216}}
    \end{minipage}%
    \hfill
    \begin{minipage}[t]{\mylengtha}
		\centering
        \vspace{3.6pt}
		\includegraphics[width=\linewidth]{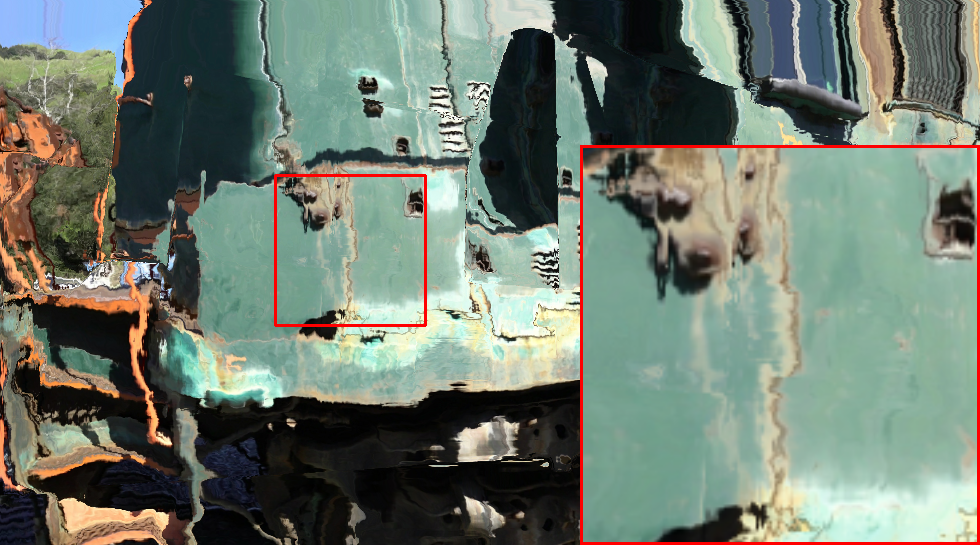}
            
        \vspace{-3pt}
        \centerline{\footnotesize{SPyNet}}
		\centerline{\footnotesize{10.25/0.1990}}
        \vspace{4pt}
        \includegraphics[width=\linewidth]{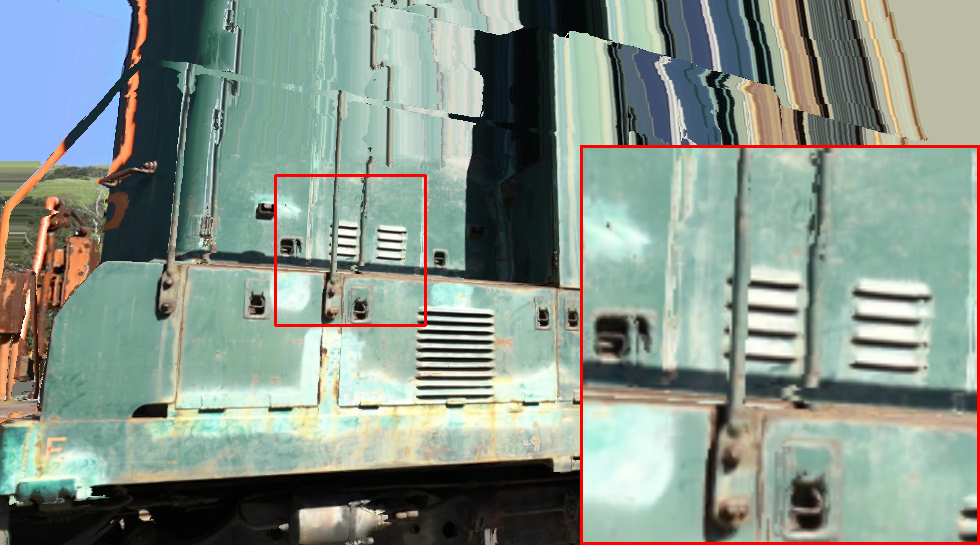}
            
        \vspace{-3pt}
        \centerline{\footnotesize{Proposed}}
		\centerline{\footnotesize{13.94/0.7217}}
    \end{minipage}%
    \caption{Visual comparison of different alignment methods on an adjacent view pair in the Train scene of the Tanks\&Temples dataset. Alignment quality is reported as PSNR/MS-SSIM.
    }
    \label{fig:visual_comparison_of_diff_alignment_methods}
    \vspace{-0.0cm}
\end{figure*}

\subsection{Alignment Experiments}
\label{subsec:alignment_experiments}
To evaluate the effectiveness of the proposed 3D Gaussian geometric priors-based alignment method, we conducted alignment experiments on the Train scene from the Tanks\&Temples dataset. The baselines for comparison include:

1. Alignment methods commonly used in learning-based multi-view image codecs, such as Homography Transformation (HT) \citep{9577926} and Patch Matching (PM) \citep{huang2023learned}.

2. Optical flow estimation methods, such as SPyNet \citep{Ranjan_2017_CVPR}, PWC-Net \citep{Sun_2018_CVPR}, and FlowFormer++ \citep{Shi_2023_CVPR}.

3. Depth map estimation methods, including original 3D-GS \citep{kerbl20233d}, COLMAP \cite{schonberger2016structure, schoenberger2016mvs} and MVSFormer++ \cite{cao2024mvsformer++}.

Alignment quality was assessed by computing PSNR and MS-SSIM between the aligned reference view images and the target view images. \tableautorefname~\ref{table:alignment_quality_and_runtime} summarizes the average alignment quality and runtime for each method. The proposed method outperformed the baselines in both PSNR and MS-SSIM, indicating its effectiveness in capturing complex disparities between views. \figureautorefname~\ref{fig:visual_comparison_of_diff_alignment_methods} provides visual comparisons, demonstrating that the proposed method achieves closer alignment with the target view images. Appendix~\ref{appendix:experiment_details} further investigates the relationship between the mask defined in \eqref{eq:image_mask} and the ghosting artifacts introduced during alignment. 

\begin{figure*}[t]
    \centering
    \includegraphics[width=0.95\linewidth]{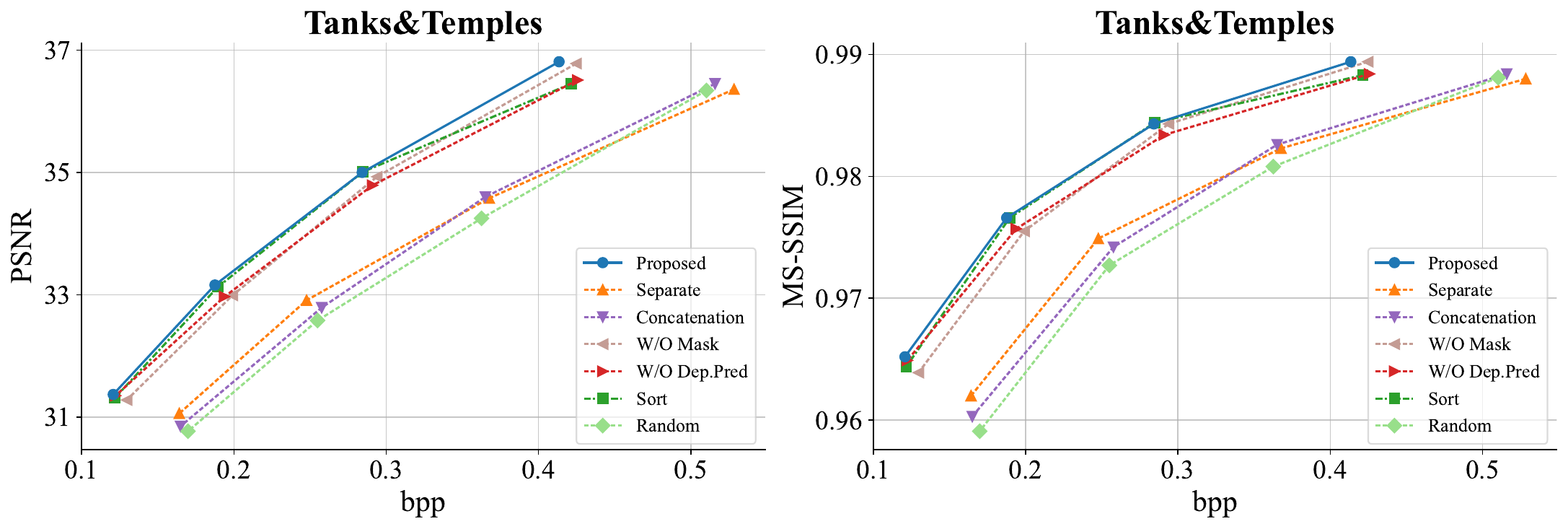}
    \vspace{-0.0cm}
    \caption{Rate-distortion curves of different ablation baselines on the Tanks\&Temples dataset.
    }
    \label{fig:tanks_and_temples_ablation_experiments}
    \vspace{-0.0cm}
\end{figure*}

\subsection{Ablation Study}
\label{subsec:ablation_study}

\textbf{Codec components.} To assess the contribution of codec components, we performed ablation experiments on the Tanks\&Temples dataset. The rate-distortion curves are shown in \figureautorefname~\ref{fig:tanks_and_temples_ablation_experiments}. Specifically, we evaluated the following baselines: (1) \textit{Separate}: encoding and decoding without cross-view information; (2) \textit{Concatenation}: direct feature concatenation from reference view without alignment; (3) \textit{W/O Mask}: removal of both image and depth mask; (4) \textit{W/O Dep.Pred}: excluding depth prediction in the depth map compression model. These baselines resulted in bitrate increases of 41.07\% (44.24\%), 42.75\% (47.52\%), 7.19\% (8.47\%), and 8.03\% (8.02\%) for PSNR (MS-SSIM), respectively, compared to the proposed method. The experimental results validate the effectiveness of the proposed components. 

\textbf{Multi-view sequence ordering.} As illustrated in \figureautorefname~\ref{fig:tanks_and_temples_ablation_experiments}, we evaluated two baselines to assess the effectiveness of the proposed multi-view sequence ordering method: (1) \textit{Sort}: sequences are ordered using the proposed method; (2) \textit{Random}: sequences are randomly ordered. The \textit{Random} baseline led to a 42.4\% (50.64\%) increase in bitrate for PSNR (MS-SSIM) compared to \textit{Sort}. Furthermore, \textit{Sort} exhibited only a 3.76\% (2.97\%) bitrate increase for PSNR (MS-SSIM) compared to the manually sorted sequences in the Tanks\&Temples dataset. These results demonstrate the effectiveness of the proposed ordering method for unsorted multi-view sequences, achieving performance close to that of manual sorting.
\section{Conclusion}
\label{sec:conclusion}
In this paper, we present 3D-LMVIC, a novel learning-based multi-view image coding framework incorporating 3D Gaussian geometric priors. This framework exploits these geometric priors to estimate complex disparities and masks between views for effectively utilizing reference view information in the compression process. Additionally, we propose a depth map compression model designed to compactly and accurately represent the geometry of each view, incorporating a cross-view depth prediction module to capture inter-view geometric correlations. Moreover, we introduce a multi-view sequence ordering method for unordered sequences, enhancing the overlap between adjacent views by defining an inter-view distance measure to guide the sequence ordering. Experimental results confirm that 3D-LMVIC surpasses existing learning-based coding schemes in compression efficiency while achieving accurate disparity estimation.

\nocite{langley00}

\bibliography{main}
\bibliographystyle{icml2025}

\newpage
\appendix
\onecolumn

\section{Disparity and Mask Estimation Algorithm}
\label{appendix:disp_and_mask_estimation_alg}

\begin{algorithm}[H]
   \caption{Disparity and Mask Estimation}
   \label{alg:disp_and_mask_estimation}
\begin{algorithmic}
   \STATE {\bfseries Input:} Depth estimation function $GSDE$, intrinsic matrix $K$, extrinsic matrices $V_n$ and $V_{n-1}$
   \STATE {\bfseries Output:} Disparity map $\boldsymbol{\Delta}_n$, mask $\boldsymbol{x}_{n,\text{m}}$
   \STATE $\boldsymbol{d}_n \gets GSDE(K, V_n)$
   \STATE $\boldsymbol{d}_{n-1} \gets GSDE(K, V_{n-1})$
   \STATE $\boldsymbol{\Delta}_n, \boldsymbol{d}'_{n-1} \gets \text{DisparityEstimation}(\boldsymbol{d}_n, K, V_n, V_{n-1})$
   \STATE $\boldsymbol{x}_{n,\text{m}} \gets \text{MaskEstimation}(\boldsymbol{\Delta}_n, \boldsymbol{d}'_{n-1}, \boldsymbol{d}_{n-1})$
\end{algorithmic}
\end{algorithm}

\begin{figure*}[h]
    \centering
    \includegraphics[width=1.0\linewidth]{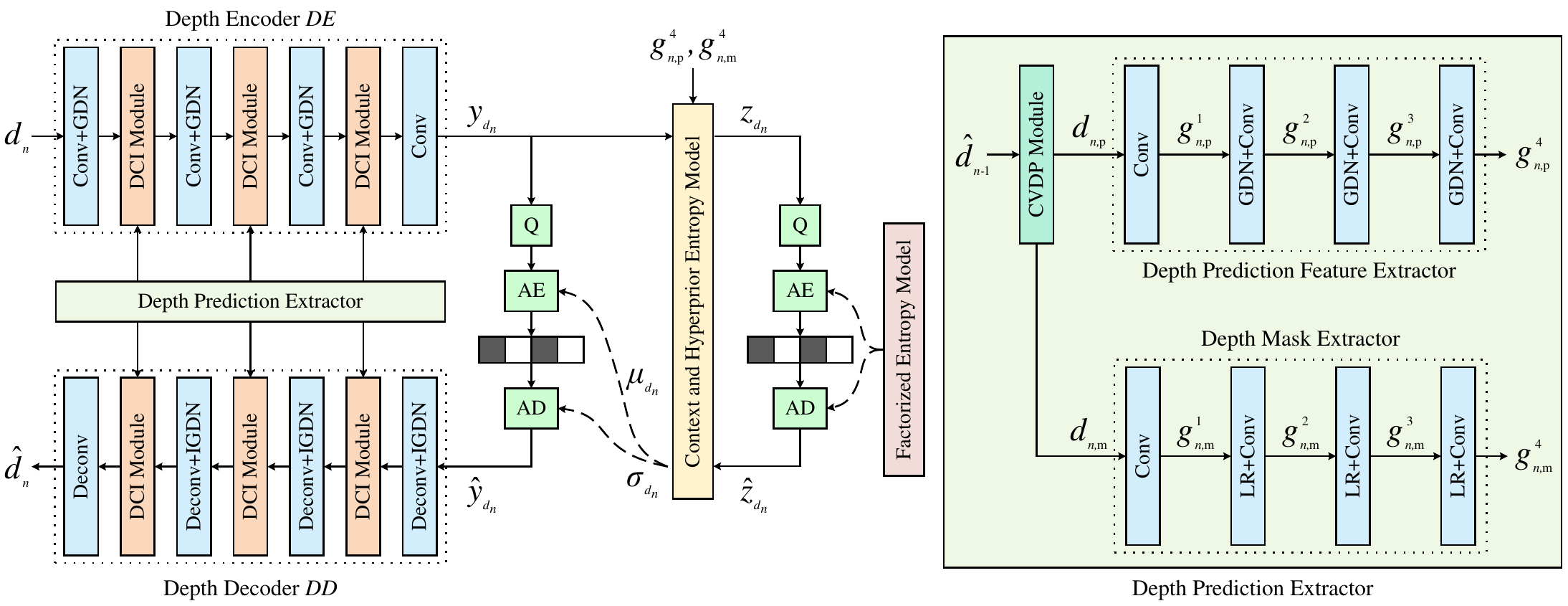}
    \vskip -0.1in
    \caption{The architecture of the proposed depth map compression model. 'LR' represents the Leaky ReLU activation function, 'Q' denotes the quantization operation, and 'AE'/'AD' refer to the arithmetic encoder/decoder, respectively.}
    \label{fig:dep_map_compression_model}
    \vspace{-0.4cm}
\end{figure*}

\section{Supplementary Information for the Depth Map Compression Model}
\label{appendix:supplement_inform_for_the_dep_map_compress_model}

As illustrated in \figureautorefname~\ref{fig:dep_map_compression_model}, during the compression and decompression of \(\boldsymbol{d}_n\), \(\boldsymbol{\hat{d}}_{n-1}\) is initially processed by the depth prediction extractor, which extracts multi-scale depth prediction features and associated feature masks. These extracted features and masks are then integrated into the depth backbone encoder and decoder via the depth context integration (DCI) module. Detailed explanations of the depth prediction extractor and the DCI module are provided in the subsequent content.

\textbf{Depth prediction extractor.} As illustrated in \figureautorefname~\ref{fig:dep_map_compression_model}, we first utilize the proposed cross-view depth prediction (CVDP) module to predict the depth map \(\boldsymbol{d}_{n, \text{p}} \in \mathbb{R}^{W \times H}\) and the associated mask \(\boldsymbol{d}_{n,\text{m}} \in \mathbb{R}^{W \times H}\) for the \(n\)-th view, based on \(\boldsymbol{\hat{d}}_{n-1}\). Specifically, for each pixel \((x_{n-1}, y_{n-1})\) in the \((n-1)\)-th view and its corresponding reconstructed depth \(\hat{d}_{n-1}\), the CVDP module determines the corresponding pixel coordinates \((x_n, y_n)\) and the depth prediction \(d'_n\) in the \(n\)-th view using the method described in \eqref{eq:disparity_estimation}. The depth at the nearest grid point \((\lfloor x_n - 0.5 \rceil, \lfloor y_n - 0.5 \rceil)\) is then set to \(d'_n\):

\begin{equation}
\boldsymbol{d}_{n, \text{p}}[\lfloor x_n - 0.5 \rceil, \lfloor y_n - 0.5 \rceil] = d'_n.
\end{equation}

This cross-view depth prediction is applied to each pixel in the \((n-1)\)-th view to construct \(\boldsymbol{d}_{n, \text{p}}\). If multiple pixel coordinates map to the same grid point, the depth prediction for that point is set to the minimum of these predicted depths. Additionally, if a grid point has no corresponding pixel coordinates, its depth prediction value is set to 0. The mask \(\boldsymbol{d}_{n,\text{m}}\) indicates whether each grid point has at least one corresponding pixel coordinate, with values set to 1 where a correspondence exists and 0 otherwise.

Subsequently, \(\boldsymbol{d}_{n, \text{p}}\) is fed into the depth prediction feature extractor to produce multi-scale depth prediction features, denoted as \(\{\boldsymbol{g}^i_{n,\text{p}} \mid i=1,2,3,4\}\). Concurrently, the mask \(\boldsymbol{d}_{n,\text{m}}\) is processed by the depth mask extractor to derive the associated multi-scale feature masks \(\{\boldsymbol{g}^i_{n,\text{m}} \mid i=1,2,3,4\}\).

\textbf{Depth Context Integration Module.} Each DCI module integrates the input features \(\boldsymbol{g}^{i^*}_{n}\) from the backbone network with \(\boldsymbol{g}^i_{n,\text{p}}\) through channel-wise concatenation, followed by element-wise multiplication with \(\boldsymbol{g}^i_{n,\text{m}}\) to produce the output feature \(\boldsymbol{g}^i_{n}\):

\begin{equation} 
\boldsymbol{g}^i_{n} = (\boldsymbol{g}^{i^*}_{n} \oplus \boldsymbol{g}^i_{n,\text{p}}) \odot \boldsymbol{g}^i_{n,\text{m}},
\end{equation}

where \(\oplus\) denotes channel-wise concatenation and \(\odot\) denotes element-wise multiplication.

\section{Proof of \(\mathcal{D}_\mathcal{V}(i,j)\) as a Distance Measure for 2-Norm and Frobenius Norm}
\label{appendix:proof_of_distance_measure}

\subsection{Proof for 2-Norm}

\subsubsection{Definition}
\begin{definition}\label{definition:1}
For \(u = (A, B)\) and \(v = (C, D)\), where \(A, C \in \mathbb{R}^{n \times m}\) and \(B, D \in \mathbb{R}^{n \times l}\), we define \((u,v)_2 = \|AC^T + BD^T\|_2\). For any scalar \(\alpha\), \(\alpha u = (\alpha A, \alpha B)\). Additionally, \(u + v = (A + C, B + D)\).
\end{definition}

\subsubsection{Lemma}
\begin{lemma}\label{lemma:1}
For any \(u = (A, B)\) and \(v = (C, D)\) as defined in Definition~\ref{definition:1}, the following inequality holds:
\[
(u,v)_2 \leq \sqrt{(u,u)_2(v,v)_2}
\]
\end{lemma}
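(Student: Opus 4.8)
The plan is to reduce the claimed Cauchy--Schwarz-type inequality to the submultiplicativity of the spectral norm by a block-matrix reformulation. First I would observe that if we form the horizontal concatenations $P = [\,A \mid B\,] \in \mathbb{R}^{n \times (m+l)}$ and $Q = [\,C \mid D\,] \in \mathbb{R}^{n \times (m+l)}$, then $AC^T + BD^T = PQ^T$, $AA^T + BB^T = PP^T$, and $CC^T + DD^T = QQ^T$. Consequently, by Definition~\ref{definition:1},
\begin{equation}
(u,v)_2 = \|PQ^T\|_2, \qquad (u,u)_2 = \|PP^T\|_2, \qquad (v,v)_2 = \|QQ^T\|_2 .
\end{equation}

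Next I would invoke two standard facts about the spectral norm. The first is that for any matrix $P$, $\|PP^T\|_2 = \|P\|_2^2$; this follows from the singular value decomposition, since the nonzero eigenvalues of $PP^T$ are the squares of the singular values of $P$, so the largest eigenvalue of $PP^T$ equals $\|P\|_2^2$. The second is submultiplicativity: $\|PQ^T\|_2 \leq \|P\|_2\,\|Q^T\|_2$, together with $\|Q^T\|_2 = \|Q\|_2$. Chaining these gives
\begin{equation}
(u,v)_2 = \|PQ^T\|_2 \leq \|P\|_2\,\|Q\|_2 = \sqrt{\|PP^T\|_2}\,\sqrt{\|QQ^T\|_2} = \sqrt{(u,u)_2\,(v,v)_2},
\end{equation}
which is exactly the asserted inequality.

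There is no real obstacle here beyond bookkeeping: the only points requiring care are (i) verifying that the block concatenation indeed produces the three products $PQ^T$, $PP^T$, $QQ^T$ (a direct computation from the definitions of matrix concatenation and transpose), and (ii) citing or briefly justifying the identity $\|PP^T\|_2 = \|P\|_2^2$ and submultiplicativity of the induced $2$-norm, both of which are elementary. I would present the block reformulation explicitly and then state the two spectral-norm facts, keeping the proof to a few lines. An entirely analogous argument — using $\|PP^T\|_F \le \|P\|_2 \|P^T\|_F$ style bounds, or more simply the identity $\|P\|_F^2 = \operatorname{tr}(PP^T)$ together with $|\langle X, Y\rangle_F| \le \|X\|_F \|Y\|_F$ applied after the same concatenation — would handle the Frobenius-norm variant needed later, though that is the subject of the companion lemma rather than this one.
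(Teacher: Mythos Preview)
Your proof is correct and takes a genuinely different route from the paper's. The paper mimics the classical Cauchy--Schwarz argument: it expands $(u+tv,u+tv)_2$ and uses the triangle inequality for $\|\cdot\|_2$ to bound it above by the quadratic $(u,u)_2 + 2t\,(u,v)_2 + t^2\,(v,v)_2$; since $(u+tv,u+tv)_2 \ge 0$ for all real $t$, it concludes that the discriminant $4(u,v)_2^2 - 4(u,u)_2(v,v)_2$ is nonpositive. Your block concatenation $P=[A\mid B]$, $Q=[C\mid D]$ instead turns the statement into $\|PQ^T\|_2 \le \|P\|_2\|Q\|_2$, which is just submultiplicativity together with the identity $\|PP^T\|_2=\|P\|_2^2$. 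Your argument is shorter and structurally cleaner; it also sidesteps a sign subtlety in the paper's expansion (the triangle inequality actually produces $|t|$ rather than $t$ in the cross terms, so the quadratic-in-$t$ bound and the discriminant step need a bit more care than written). The paper's approach, by contrast, runs uniformly alongside its Frobenius-norm companion lemma and does not rely on the SVD-based identity $\|PP^T\|_2=\|P\|_2^2$.
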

\begin{proof}
For any real number \(t\), we have:
\[
\begin{aligned}
(u+tv, u+tv)_2 &= \| (A+tC)(A+tC)^T + (B+tD)(B+tD)^T \|_2 \\
&\leq \| AA^T + BB^T \|_2 + t \| AC^T + BD^T \|_2 + t \| CA^T + DB^T \|_2 + t^2 \| CC^T + DD^T \|_2 \\
&= (u,u)_2 + t (u,v)_2 + t (v,u)_2 + t^2 (v,v)_2 \\
&= (u,u)_2 + 2t (u,v)_2 + t^2 (v,v)_2
\end{aligned}
\]
The right-hand side of the last equation can be viewed as a quadratic expression in \(t\) and is greater than or equal to \((u+tv, u+tv)_2\), which is non-negative. Therefore, the discriminant of this quadratic must be non-positive:
\[
(2(u,v)_2)^2 - 4 (u,u)_2 (v,v)_2 \leq 0
\]
Thus, we obtain:
\[
(u,v)_2 \leq \sqrt{(u,u)_2 (v,v)_2}
\]
\end{proof}

\subsubsection{Theorem}
\begin{theorem}\label{theorem:1}
\(\mathcal{D}_\mathcal{V}(i,j) = \| V_iV_j^{-1} - I \|_2\) is a distance metric.
\end{theorem}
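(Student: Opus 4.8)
The plan is to verify the three metric axioms for $\mathcal{D}_\mathcal{V}(i,j) = \|V_iV_j^{-1} - I\|_2$ directly, using the group structure of invertible matrices together with the sub-multiplicativity of the operator norm. Write $M_{ij} = V_iV_j^{-1}$, so that $M_{ij}M_{jk} = M_{ik}$, $M_{ii} = I$, and $M_{ji} = M_{ij}^{-1}$. First I would dispose of non-negativity and the identity of indiscernibles: $\|M_{ij} - I\|_2 \ge 0$ always, and it equals $0$ iff $M_{ij} = I$, i.e. iff $V_i = V_j$; assuming the extrinsic matrices of distinct views are distinct, this gives $\mathcal{D}_\mathcal{V}(i,j) = 0 \iff i = j$.

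For symmetry, the key observation is the algebraic identity
\begin{equation}
M_{ji} - I = M_{ij}^{-1} - I = -M_{ij}^{-1}(M_{ij} - I) = M_{ij}^{-1}(I - M_{ij}).
\end{equation}
From this I would like to conclude $\|M_{ji} - I\|_2 = \|M_{ij} - I\|_2$, but sub-multiplicativity alone only gives $\|M_{ji} - I\|_2 \le \|M_{ij}^{-1}\|_2 \|M_{ij} - I\|_2$, which is not tight in general. This is the main obstacle: the operator $2$-norm is not invariant under multiplication by an arbitrary invertible matrix. The resolution is to recall that the $V_n$ are rigid-body transformation (extrinsic camera) matrices, so each $V_n$ has the block form $\begin{bmatrix} R_n & t_n \\ 0 & 1\end{bmatrix}$ with $R_n$ orthogonal — or, more to the point, whatever the precise class, the relevant fact is that $M_{ij}$ is orthogonal (if one strips the translation) or at least that $\|M_{ij}x\|=\|x\|$ on the relevant subspace, so that $\|M_{ij}^{-1}\|_2 = 1$. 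I would state explicitly the assumption that $V_iV_j^{-1}$ is an orthogonal matrix (true when the $V$'s are pure rotations, or after a suitable normalization), under which $M_{ij}^{-1} = M_{ij}^T$ and the operator norm is unitarily invariant on both sides, yielding $\|M_{ij}^{-1}(I - M_{ij})\|_2 = \|I - M_{ij}\|_2 = \|M_{ij} - I\|_2$. Alternatively, I note that $\|\cdot\|_2$ is invariant under left- and right-multiplication by orthogonal matrices, which handles symmetry cleanly.

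For the triangle inequality I would use the telescoping identity
\begin{equation}
M_{ik} - I = M_{ij}M_{jk} - I = M_{ij}(M_{jk} - I) + (M_{ij} - I),
\end{equation}
and then apply the ordinary triangle inequality for $\|\cdot\|_2$ followed by sub-multiplicativity:
\begin{equation}
\|M_{ik} - I\|_2 \le \|M_{ij}\|_2\,\|M_{jk} - I\|_2 + \|M_{ij} - I\|_2.
\end{equation}
Again using $\|M_{ij}\|_2 = 1$ for orthogonal $M_{ij}$, this collapses to $\mathcal{D}_\mathcal{V}(i,k) \le \mathcal{D}_\mathcal{V}(i,j) + \mathcal{D}_\mathcal{V}(j,k)$. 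So the whole proof hinges on the single structural fact that $\|M_{ij}\|_2 = \|M_{ij}^{-1}\|_2 = 1$; once that is in hand, symmetry and the triangle inequality are two-line consequences of the group law and norm sub-multiplicativity. I would close by remarking that the identical argument goes through verbatim with $\|\cdot\|_2$ replaced by $\|\cdot\|_F$, since the Frobenius norm is also sub-multiplicative and orthogonally invariant — which is why the statement covers both norms, and why the analogous Lemma~\ref{lemma:1} (Cauchy–Schwarz-type bound) is the Frobenius-side ingredient that plays the role $\|M_{ij}\|=1$ plays here. The one subtlety I would flag is whether the camera extrinsics are literally orthogonal or only orthogonal up to the homogeneous block structure; if the latter, I would either restrict attention to the rotational sub-block or invoke that the translation part does not affect the operator norm of $M_{ij} - I$ acting on direction vectors, and make that reduction explicit before running the three-axiom check.
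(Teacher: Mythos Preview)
Your entire argument, for both symmetry and the triangle inequality, rests on the claim that $\|M_{ij}\|_2 = \|V_iV_j^{-1}\|_2 = 1$. This is false. The extrinsic matrices have the affine block form $V_n = \begin{pmatrix} R_n & t_n \\ 0 & 1\end{pmatrix}$ with $R_n$ orthogonal, so
\[
M_{ij} = V_iV_j^{-1} = \begin{pmatrix} R_iR_j^T & t_i - R_iR_j^T t_j \\ 0 & 1\end{pmatrix}
\]
is a rigid motion, not an orthogonal matrix. A direct singular-value computation on $M_{ij}^TM_{ij}$ shows $\|M_{ij}\|_2 > 1$ whenever the relative translation $t_i - R_iR_j^T t_j$ is nonzero, which is the generic situation for distinct camera poses. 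Your proposed fallbacks do not rescue the argument: restricting to the rotational sub-block changes the quantity being measured to something other than $\|V_iV_j^{-1} - I\|_2$, and the translation column \emph{does} contribute to the operator norm of $M_{ij} - I$ --- indeed that column is the only nonzero part of the last column of $M_{ij}-I$.

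The paper's proof avoids this trap by never invoking $\|M_{ij}\|_2 = 1$. For symmetry it writes $\|M_{ij}-I\|_2 = \|(M_{ij}-I)(M_{ij}-I)^T\|_2^{1/2}$, expands the $3\times 3$ block explicitly, and then uses orthogonal invariance of $\|\cdot\|_2$ under conjugation by the genuinely orthogonal matrix $R_jR_i^T$ to interchange $i$ and $j$. For the triangle inequality it rewrites $\mathcal{D}_\mathcal{V}(i,j)^2 = \|A_{i,j}A_{i,j}^T + B_{i,j}B_{i,j}^T\|_2$ with $A_{i,j} = R_j^T - R_i^T$ and $B_{i,j} = R_i^Tt_i - R_j^Tt_j$; since $A_{i,j} + A_{j,k} + A_{k,i} = 0$ and likewise for $B$, the triangle inequality then reduces to the Cauchy--Schwarz-type bound $(u,v)_2 \le \sqrt{(u,u)_2(v,v)_2}$ for $(u,v)_2 := \|AC^T + BD^T\|_2$. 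That is exactly Lemma~\ref{lemma:1}, and it is used in the $2$-norm case --- not, as you suggested, only on the Frobenius side. The translation is handled as a separate additive term $B_{i,j}B_{i,j}^T$ inside the norm, not by pretending it is absent.
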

\begin{proof}
We need to prove that \(\mathcal{D}_\mathcal{V}(i,j)\) satisfies non-negativity, symmetry, and the triangle inequality.

\textbf{Non-negativity:} Since \(\mathcal{D}_\mathcal{V}(i,j)\) is a norm, it is non-negative. Additionally, as the extrinsic matrices for different views are distinct, \(V_i \neq V_j\) for \(i \neq j\). \(\mathcal{D}_\mathcal{V}(i,j) = 0\) if and only if \(V_iV_j^{-1} - I = 0\), which holds only when \(V_i = V_j\), i.e., \(i = j\).

\textbf{Symmetry:} The extrinsic matrix \( V_i \) can be represented as \( V_i = \begin{pmatrix} R_i & t_i \\ 0 & 1 \end{pmatrix} \), where \( R_i \in \mathbb{R}^{3 \times 3} \) is a rotation matrix \footnote{A rotation matrix is an orthogonal matrix, meaning its inverse is equal to its transpose.} and \(t_i \in \mathbb{R}^{3 \times 1}\) is a translation vector.
We have:

\begin{align*}
\mathcal{D}_\mathcal{V}(i,j) &=\| V_iV_j^{-1} - I \|_2 \\
&= \left\| \begin{pmatrix} R_i & t_i \\ 0 & 1 \end{pmatrix}\begin{pmatrix} R_j^T & -R_j^Tt_j \\ 0 & 1 \end{pmatrix} - I \right\|_2 \\
&= \left\| \begin{pmatrix} R_iR_j^T-I & -R_iR_j^Tt_j+t_i \\ 0 & 0 \end{pmatrix} \right\|_2 \\
&= \left\| \begin{pmatrix} R_iR_j^T-I & -R_iR_j^Tt_j+t_i \\ 0 & 0 \end{pmatrix}\begin{pmatrix} R_iR_j^T-I & -R_iR_j^Tt_j+t_i \\ 0 & 0 \end{pmatrix}^T \right\|_2^{\frac{1}{2}} \\
&= \left\| 2I - R_jR_i^T - R_iR_j^T + R_iR_j^Tt_jt_j^TR_jR_i^T - t_it_j^TR_jR_i^T - R_iR_j^Tt_jt_i^T + t_it_i^T \right\|_2^{\frac{1}{2}} \\
&= \left\| R_jR_i^T(2I - R_jR_i^T - R_iR_j^T + R_iR_j^Tt_jt_j^TR_jR_i^T - t_it_j^TR_jR_i^T - R_iR_j^Tt_jt_i^T + t_it_i^T)R_iR_j^T \right\|_2^{\frac{1}{2}} \\
&= \left\| 2I - R_jR_i^T - R_iR_j^T + t_jt_j^T - R_jR_i^Tt_it_j^T - t_jt_i^TR_iR_j^T + R_jR_i^Tt_it_i^TR_iR_j^T \right\|_2^{\frac{1}{2}} \\
&= \mathcal{D}_\mathcal{V}(j,i)
\end{align*}

The fourth equality follows from the fact that for any matrix \(A\), \(\|A\|_2 = \|A A^T\|_2^{\frac{1}{2}}\). The sixth equality is due to the orthogonality of \(R_i\) and \(R_j\), and the invariance of the 2-norm under orthogonal transformations. The final equality holds because interchanging the indices \(i\) and \(j\) in the expression on the right-hand side of the fifth equality leads to the same expression as \(\mathcal{D}_\mathcal{V}(j,i)\), which matches the right-hand side of the seventh equality.

\textbf{Triangle inequality:} For views \(i\), \(j\), and \(k\), define \(A_{i,j} = R_j^T - R_i^T\), \(B_{i,j} = -R_j^T t_j + R_i^T t_i\), and similarly for \(A_{j,k}, B_{j,k}, A_{k,i}, B_{k,i}\). Let \(u_{j,k} = (A_{j,k}, B_{j,k})\) and \(u_{k,i} = (A_{k,i}, B_{k,i})\). Starting from the fourth equation in the symmetry proof, we proceed as follows:

\begin{align*}
\mathcal{D}_\mathcal{V}(i,j) &= \left\| \begin{pmatrix} R_i R_j^T - I & -R_i R_j^T t_j + t_i \\ 0 & 0 \end{pmatrix} \begin{pmatrix} R_i R_j^T - I & -R_i R_j^T t_j + t_i \\ 0 & 0 \end{pmatrix}^T \right\|_2^{\frac{1}{2}} \\
&= \left\| (R_i R_j^T - I)(R_i R_j^T - I)^T + (-R_i R_j^T t_j + t_i)(-R_i R_j^T t_j + t_i)^T \right\|_2^{\frac{1}{2}} \\
&= \left\| R_i^T \left( (R_i R_j^T - I)(R_i R_j^T - I)^T + (-R_i R_j^T t_j + t_i)(-R_i R_j^T t_j + t_i)^T \right) R_i \right\|_2^{\frac{1}{2}} \\
&= \left\| (R_j^T - R_i^T)(R_j^T - R_i^T)^T + (-R_j^T t_j + R_i^T t_i)(-R_j^T t_j + R_i^T t_i)^T \right\|_2^{\frac{1}{2}} \\
&= \left\| A_{i,j} A_{i,j}^T + B_{i,j} B_{i,j}^T \right\|_2^{\frac{1}{2}} \\
&= \left\| (A_{j,k} + A_{k,i})(A_{j,k} + A_{k,i})^T + (B_{j,k} + B_{k,i})(B_{j,k} + B_{k,i})^T \right\|_2^{\frac{1}{2}} \\
&= \left\| A_{j,k} A_{j,k}^T + B_{j,k} B_{j,k}^T + A_{k,i} A_{k,i}^T + B_{k,i} B_{k,i}^T + A_{j,k} A_{k,i}^T + B_{j,k} B_{k,i}^T + A_{k,i} A_{j,k}^T + B_{k,i} B_{j,k}^T \right\|_2^{\frac{1}{2}} \\
&\leq \left( \| A_{j,k} A_{j,k}^T + B_{j,k} B_{j,k}^T \|_2 + \| A_{k,i} A_{k,i}^T + B_{k,i} B_{k,i}^T \|_2 + 2 \| A_{j,k} A_{k,i}^T + B_{j,k} B_{k,i}^T \|_2 \right)^{\frac{1}{2}} \\
&= \left( \mathcal{D}_\mathcal{V}(j,k)^2 + \mathcal{D}_\mathcal{V}(k,i)^2 + 2(u_{j,k}, u_{k,i})_2 \right)^{\frac{1}{2}} \\
&\leq \left( \mathcal{D}_\mathcal{V}(j,k)^2 + \mathcal{D}_\mathcal{V}(k,i)^2 + 2 \sqrt{(u_{j,k}, u_{j,k})_2 (u_{k,i}, u_{k,i})_2} \right)^{\frac{1}{2}} \\
&= \left( \mathcal{D}_\mathcal{V}(j,k)^2 + \mathcal{D}_\mathcal{V}(k,i)^2 + 2 \mathcal{D}_\mathcal{V}(j,k) \mathcal{D}_\mathcal{V}(k,i) \right)^{\frac{1}{2}} \\
&= \mathcal{D}_\mathcal{V}(j,k) + \mathcal{D}_\mathcal{V}(k,i)
\end{align*}

The second inequality follows from Lemma~\ref{lemma:1}.
\end{proof}

\subsection{Proof for Frobenius Norm}

\subsubsection{Definition}
\begin{definition}\label{definition:2}
For \(u = (A,B)\) and \(v = (C,D)\) as defined in Definition~\ref{definition:1}, we define \((u,v)_F = \mathrm{tr}\left(AC^T + BD^T\right)\).
\end{definition}

\subsubsection{Lemma}\label{lemma:2}
\begin{lemma}
For \(u\) and \(v\) as defined in Definition~\ref{definition:2}, the following inequality holds: 
\[
(u,v)_F \leq \sqrt{(u,u)_F (v,v)_F}.
\]
\end{lemma}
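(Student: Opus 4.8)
The plan is to recognize that $(\cdot,\cdot)_F$ is nothing but the standard Frobenius (trace) inner product applied to the ``stacked'' matrix pairs, so that the claimed bound is exactly the Cauchy--Schwarz inequality, proved by the same discriminant argument used for Lemma~\ref{lemma:1}. In fact the Frobenius case is cleaner than the $2$-norm case, because the trace is linear rather than merely subadditive.

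First I would check that $(u,v)_F$ is a symmetric bilinear form on the vector space of pairs $(A,B)$ with the operations of Definition~\ref{definition:1}. Symmetry follows from $\mathrm{tr}(AC^T) = \mathrm{tr}\big((AC^T)^T\big) = \mathrm{tr}(CA^T)$, and bilinearity from the linearity of the trace together with $\alpha u = (\alpha A, \alpha B)$ and $u+v = (A+C, B+D)$. I would also record positive semidefiniteness: $(u,u)_F = \mathrm{tr}(AA^T) + \mathrm{tr}(BB^T) = \|A\|_F^2 + \|B\|_F^2 \geq 0$.

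Next, for every real $t$, bilinearity and symmetry give the exact identity $(u+tv,\,u+tv)_F = (u,u)_F + 2t\,(u,v)_F + t^2\,(v,v)_F$, and the left-hand side is nonnegative by positive semidefiniteness. If $(v,v)_F = 0$, this is an affine function of $t$ that is nonnegative for all $t$, which forces $(u,v)_F = 0$, so the inequality holds trivially (both sides are $0$). Otherwise the right-hand side is a genuine quadratic in $t$ with positive leading coefficient that is nonnegative for all $t$, so its discriminant is nonpositive, i.e. $4(u,v)_F^2 - 4(u,u)_F(v,v)_F \leq 0$; rearranging yields $(u,v)_F \leq |(u,v)_F| \leq \sqrt{(u,u)_F (v,v)_F}$, as claimed.

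I do not expect any real obstacle: the only point needing a line of care is the degenerate case $(v,v)_F = 0$, and the only conceptual step is observing that, because the trace is linear, the expansion of $(u+tv,u+tv)_F$ is an equality (unlike the inequality in the proof of Lemma~\ref{lemma:1}), which makes the standard Cauchy--Schwarz discriminant argument apply verbatim.
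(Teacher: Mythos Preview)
Your proposal is correct and follows essentially the same approach as the paper: the paper's proof simply states that the argument is analogous to that of Lemma~\ref{lemma:1}, using the properties of the trace in place of the $2$-norm, which is exactly the discriminant-based Cauchy--Schwarz argument you have written out. Your version is in fact more complete, since you explicitly note that linearity of the trace makes the quadratic expansion an equality and you handle the degenerate case $(v,v)_F = 0$.
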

\begin{proof}
The method of proof is analogous to that used in Lemma~\ref{lemma:1}. By leveraging the properties of the trace and following a similar reasoning process, the result is derived. 
\end{proof}

\subsubsection{Theorem}
\begin{theorem}
\(\mathcal{D}_\mathcal{V}(i,j) = \| V_iV_j^{-1} - I \|_F\) is a distance metric.
\end{theorem}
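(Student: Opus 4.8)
The plan is to mirror the $2$-norm argument of Theorem~\ref{theorem:1} essentially line for line, replacing the bilinear pairing $(\cdot,\cdot)_2$ by the Frobenius pairing $(\cdot,\cdot)_F$ of Definition~\ref{definition:2} and invoking Lemma~\ref{lemma:2} wherever the earlier proof used Lemma~\ref{lemma:1}. As before there are three obligations: non-negativity together with identity of indiscernibles, symmetry, and the triangle inequality. Non-negativity is immediate, since $\|\cdot\|_F$ is a norm, so $\mathcal{D}_\mathcal{V}(i,j)\ge 0$, and it vanishes exactly when $V_iV_j^{-1}=I$, i.e. $V_i=V_j$; because distinct views have distinct extrinsics this happens iff $i=j$.

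The crux — and really the only step needing computation — is to re-express $\mathcal{D}_\mathcal{V}(i,j)^2$ as the squared norm of a pair. Using the block form $V_i=\begin{pmatrix} R_i & t_i \\ 0 & 1 \end{pmatrix}$ one gets $V_iV_j^{-1}-I=\begin{pmatrix} R_iR_j^T-I & t_i-R_iR_j^Tt_j \\ 0 & 0 \end{pmatrix}$, whose bottom block row is zero, so
\[
\mathcal{D}_\mathcal{V}(i,j)^2=\mathrm{tr}\!\left[(R_iR_j^T-I)(R_iR_j^T-I)^T+(t_i-R_iR_j^Tt_j)(t_i-R_iR_j^Tt_j)^T\right].
\]
Conjugating inside the trace by the orthogonal matrix $R_i$ — legitimate because $\mathrm{tr}(R_i^TMR_i)=\mathrm{tr}(M)$ — and simplifying $R_i^T(R_iR_j^T-I)=R_j^T-R_i^T$ and $R_i^T(t_i-R_iR_j^Tt_j)=R_i^Tt_i-R_j^Tt_j$, this reduces to $\mathrm{tr}(A_{i,j}A_{i,j}^T+B_{i,j}B_{i,j}^T)=(u_{i,j},u_{i,j})_F$, where $A_{i,j}=R_j^T-R_i^T$, $B_{i,j}=R_i^Tt_i-R_j^Tt_j$, and $u_{i,j}=(A_{i,j},B_{i,j})$. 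I expect deriving this identity cleanly — keeping track of transposes and of the fact that the conjugation has to be distributed onto both factors of each outer product — to be the main obstacle; after that everything is purely formal.

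Symmetry then falls out because $A_{j,i}=-A_{i,j}$ and $B_{j,i}=-B_{i,j}$, hence $u_{j,i}=-u_{i,j}$ and $(u_{j,i},u_{j,i})_F=(u_{i,j},u_{i,j})_F$. For the triangle inequality, note the telescoping identities $A_{i,j}=-(A_{j,k}+A_{k,i})$ and $B_{i,j}=-(B_{j,k}+B_{k,i})$, so $u_{i,j}=-(u_{j,k}+u_{k,i})$. Since $(\cdot,\cdot)_F=\mathrm{tr}(AC^T+BD^T)$ is a genuine symmetric bilinear form, expand
\[
\mathcal{D}_\mathcal{V}(i,j)^2=(u_{j,k}+u_{k,i},\,u_{j,k}+u_{k,i})_F=\mathcal{D}_\mathcal{V}(j,k)^2+2(u_{j,k},u_{k,i})_F+\mathcal{D}_\mathcal{V}(k,i)^2,
\]
bound the cross term via Lemma~\ref{lemma:2} by $(u_{j,k},u_{k,i})_F\le\sqrt{(u_{j,k},u_{j,k})_F(u_{k,i},u_{k,i})_F}=\mathcal{D}_\mathcal{V}(j,k)\,\mathcal{D}_\mathcal{V}(k,i)$, recognize the resulting bound as the perfect square $(\mathcal{D}_\mathcal{V}(j,k)+\mathcal{D}_\mathcal{V}(k,i))^2$, and take square roots to get $\mathcal{D}_\mathcal{V}(i,j)\le\mathcal{D}_\mathcal{V}(j,k)+\mathcal{D}_\mathcal{V}(k,i)$. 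This expansion is in fact slightly cleaner than the $2$-norm version, which had to push the cross term through subadditivity of $\|\cdot\|_2$; here bilinearity of the trace pairing handles it directly.
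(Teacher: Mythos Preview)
Your proof is correct and follows the paper's overall route: reduce $\mathcal{D}_\mathcal{V}(i,j)^2$ to $\mathrm{tr}(A_{i,j}A_{i,j}^T+B_{i,j}B_{i,j}^T)=(u_{i,j},u_{i,j})_F$ via orthogonal conjugation by $R_i$ inside the trace, then use the telescoping $A_{i,j}=-(A_{j,k}+A_{k,i})$, $B_{i,j}=-(B_{j,k}+B_{k,i})$ together with Lemma~\ref{lemma:2} for the triangle inequality. The one place you diverge is symmetry. The paper establishes it by an independent computation: it expands the $4\times 4$ block product, writes $\|\cdot\|_F^2$ as a trace, and then applies linearity and cyclicity of the trace term by term to see that the resulting expression is invariant under swapping $i\leftrightarrow j$. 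You instead read symmetry off directly from the identity $\mathcal{D}_\mathcal{V}(i,j)^2=(u_{i,j},u_{i,j})_F$ and the antisymmetry $u_{j,i}=-u_{i,j}$. Your route is shorter and more uniform, since symmetry and the triangle inequality both flow from the same representation; the paper's explicit trace expansion has the minor virtue of exhibiting symmetry without first performing the $R_i$-conjugation. Your closing remark is also on point: because $(\cdot,\cdot)_F$ is genuinely bilinear, the cross term splits off by equality rather than via the subadditivity step needed in the $2$-norm case, and the paper's Frobenius proof indeed exploits exactly this.
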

\begin{proof}
We need to prove that \(\mathcal{D}_\mathcal{V}(i,j)\) satisfies non-negativity, symmetry, and the triangle inequality.

\textbf{Non-negativity:} The proof follows a similar approach to that of Theorem~\ref{theorem:1}, so we omit the details here.

\textbf{Symmetry:}
\begin{align*}
\mathcal{D}_\mathcal{V}(i,j) &=\| V_iV_j^{-1} - I \|_F \\
&= \left\| \begin{pmatrix} R_i & t_i \\ 0 & 1 \end{pmatrix} \begin{pmatrix} R_j^T & -R_j^Tt_j \\ 0 & 1 \end{pmatrix} - I \right\|_F \\
&= \left\| \begin{pmatrix} R_iR_j^T - I & -R_iR_j^T t_j + t_i \\ 0 & 0 \end{pmatrix} \right\|_F \\
&= \mathrm{tr}\left( \begin{pmatrix} R_iR_j^T - I & -R_iR_j^T t_j + t_i \\ 0 & 0 \end{pmatrix} \begin{pmatrix} R_iR_j^T - I & -R_iR_j^T t_j + t_i \\ 0 & 0 \end{pmatrix}^T \right)^{\frac{1}{2}} \\
&= \mathrm{tr}\left( 2I - R_jR_i^T - R_iR_j^T + R_iR_j^T t_j t_j^T R_j R_i^T - t_i t_j^T R_j R_i^T - R_i R_j^T t_j t_i^T + t_i t_i^T \right)^{\frac{1}{2}} \\
&= \left( \mathrm{tr}(2I) - \mathrm{tr}(R_jR_i^T) - \mathrm{tr}(R_iR_j^T) + \mathrm{tr}(R_iR_j^T t_j t_j^T R_j R_i^T) - \mathrm{tr}(t_i t_j^T R_j R_i^T) - \mathrm{tr}(R_i R_j^T t_j t_i^T) + \mathrm{tr}(t_i t_i^T) \right)^{\frac{1}{2}} \\
&= \left( \mathrm{tr}(2I) - \mathrm{tr}(R_jR_i^T) - \mathrm{tr}(R_iR_j^T) + \mathrm{tr}(t_j t_j^T) - \mathrm{tr}(R_i^T t_i t_j^T R_j) - \mathrm{tr}(R_j^T t_j t_i^T R_i) + \mathrm{tr}(t_i t_i^T) \right)^{\frac{1}{2}} \\
&= \left( \mathrm{tr}(2I) - \mathrm{tr}(R_iR_j^T) - \mathrm{tr}(R_jR_i^T) + \mathrm{tr}(t_i t_i^T) - \mathrm{tr}(R_j^T t_j t_i^T R_i) - \mathrm{tr}(R_i^T t_i t_j^T R_j) + \mathrm{tr}(t_j t_j^T) \right)^{\frac{1}{2}} \\
&= \mathcal{D}_\mathcal{V}(j,i)
\end{align*}

The fourth equality holds because, for any matrix \( A \), we have \( \| A \|_F = \mathrm{tr}(AA^T)^{\frac{1}{2}} \). The sixth equality is a result of the linearity of the trace operator. The seventh equality follows from the cyclic property of the trace, for instance, \( \mathrm{tr}(R_iR_j^T t_j t_j^T R_j R_i^T) = \mathrm{tr}(t_j t_j^T R_j R_i^T R_i R_j^T) = \mathrm{tr}(t_j t_j^T) \).

\textbf{Triangle Inequality:} For views \(i\), \(j\), and \(k\), we follow the same definitions of \(A_{i,j}\), \(B_{i,j}\), \(A_{j,k}\), \(B_{j,k}\), \(A_{k,i}\), \(B_{k,i}\), \(u_{j,k}\), and \(u_{k,i}\) as in the proof of the triangle inequality in Theorem~\ref{theorem:1}. Starting from the fourth equality in the proof of symmetry, we have:
\begin{align*}
\mathcal{D}_\mathcal{V}(i,j) &= \mathrm{tr}\left( \begin{pmatrix} R_iR_j^T - I & -R_iR_j^T t_j + t_i \\ 0 & 0 \end{pmatrix} \begin{pmatrix} R_iR_j^T - I & -R_iR_j^T t_j + t_i \\ 0 & 0 \end{pmatrix}^T \right)^{\frac{1}{2}} \\
&= \mathrm{tr}\left( (R_i R_j^T - I)(R_i R_j^T - I)^T + (-R_i R_j^T t_j + t_i)(-R_i R_j^T t_j + t_i)^T \right)^{\frac{1}{2}} \\
&= \mathrm{tr}\left( R_i^T \left( (R_i R_j^T - I)(R_i R_j^T - I)^T + (-R_i R_j^T t_j + t_i)(-R_i R_j^T t_j + t_i)^T \right) R_i \right)^{\frac{1}{2}} \\
&= \mathrm{tr}\left( (R_j^T - R_i^T)(R_j^T - R_i^T)^T + (-R_j^T t_j + R_i^T t_i)(-R_j^T t_j + R_i^T t_i)^T \right)^{\frac{1}{2}} \\
&= \mathrm{tr}\left( A_{i,j} A_{i,j}^T + B_{i,j} B_{i,j}^T \right)^{\frac{1}{2}} \\
&= \mathrm{tr}\left( (A_{j,k} + A_{k,i})(A_{j,k} + A_{k,i})^T + (B_{j,k} + B_{k,i})(B_{j,k} + B_{k,i})^T \right)^{\frac{1}{2}} \\
&= \mathrm{tr}\left( A_{j,k} A_{j,k}^T + B_{j,k} B_{j,k}^T + A_{k,i} A_{k,i}^T + B_{k,i} B_{k,i}^T + A_{j,k} A_{k,i}^T + B_{j,k} B_{k,i}^T + A_{k,i} A_{j,k}^T + B_{k,i} B_{j,k}^T \right)^{\frac{1}{2}} \\
&= \left( \mathrm{tr}\left( A_{j,k} A_{j,k}^T + B_{j,k} B_{j,k}^T \right) + \mathrm{tr}\left( A_{k,i} A_{k,i}^T + B_{k,i} B_{k,i}^T \right) + 2 \mathrm{tr}\left( A_{j,k} A_{k,i}^T + B_{j,k} B_{k,i}^T \right) \right)^{\frac{1}{2}} \\
&= \left( \mathcal{D}_\mathcal{V}(j,k)^2 + \mathcal{D}_\mathcal{V}(k,i)^2 + 2 (u_{j,k}, u_{k,i})_F \right)^{\frac{1}{2}} \\
&\leq \left( \mathcal{D}_\mathcal{V}(j,k)^2 + \mathcal{D}_\mathcal{V}(k,i)^2 + 2 \sqrt{(u_{j,k}, u_{j,k})_F (u_{k,i}, u_{k,i})_F} \right)^{\frac{1}{2}} \\
&= \left( \mathcal{D}_\mathcal{V}(j,k)^2 + \mathcal{D}_\mathcal{V}(k,i)^2 + 2 \mathcal{D}_\mathcal{V}(j,k) \mathcal{D}_\mathcal{V}(k,i) \right)^{\frac{1}{2}} \\
&= \mathcal{D}_\mathcal{V}(j,k) + \mathcal{D}_\mathcal{V}(k,i)
\end{align*}

The third equality holds because the trace is invariant under similarity transformations.
\end{proof}

\section{Experimental Details}
\label{appendix:experiment_details}
\textbf{Datasets.} Our evaluation is conducted on three multi-view image datasets: Tanks\&Temples, Mip-NeRF\ 360, and Deep Blending. Tanks\&Temples consists of 21 diverse indoor and outdoor scenes, ranging from sculptures and large vehicles to complex large-scale environments, with intricate geometry and varied lighting conditions. Mip-NeRF\ 360 includes 9 scenes—5 outdoor and 4 indoor—captured in unbounded settings, allowing for 360-degree camera rotations and capturing content at varying distances. From the Deep Blending dataset, we selected 9 representative scenes that span indoor, outdoor, vegetation-rich, and nighttime environments. For all datasets, 90\% of the images in each scene were allocated for training, with the remaining 10\% used for testing.

\textbf{Benchmarks.} We assess the coding performance of MV-HEVC using the HTM-16.3 software\footnote{https://vcgit.hhi.fraunhofer.de/jvet/HTM/-/tags}. The learning-based multi-view image codecs used as baselines, along with our proposed method, are trained under the same conditions on a shared training set and evaluated on a common test set. For the 3D Gaussian Splatting compression method (HAC), we train the 3D Gaussian representations on each scene's test data and measure the reconstruction quality of the rendered images. The bpp is determined by dividing the size of the compressed 3D Gaussian file by the total number of pixels in the test images.

\textbf{Implementation Details.} We utilize the Adam optimizer for training with a batch size of 2. To facilitate data augmentation and optimize memory usage, each image is randomly cropped to \(256 \times 256\). Correspondingly, the principal point in the intrinsic matrix \(K\) is adjusted to reflect the new crop. The intrinsic matrix \(K\) is given by:

\[
K = \begin{pmatrix}
f_x & 0 & c_x \\
0 & f_y & c_y \\
0 & 0 & 1
\end{pmatrix},
\]

where \( f_x \) and \( f_y \) represent the focal lengths along the x and y axes, respectively, and \( c_x \) and \( c_y \) are the principal point coordinates. If the top-left corner of the crop is located at \((p_x, p_y)\) in the original image, the updated intrinsic matrix \(K'\) becomes:

\[
K' = \begin{pmatrix}
f_x & 0 & c_x - p_x \\
0 & f_y & c_y - p_y \\
0 & 0 & 1
\end{pmatrix}.
\]

\textbf{Ablation study details.} To implement \textit{Separate}, we set the reference view images, predicted depth maps, and masks to full-zero tensors, with \(\lambda_\text{dep}\) set to zero. In \textit{Concatenation}, alignment operations in the ICT modules are removed. For \textit{W/O Mask}, we eliminate all mask-related multiplications in the ICT and DCI modules. In \textit{W/O Dep.Pred}, the predicted depth maps are replaced with full-zero tensors. For both \textit{Sort} and \textit{Random}, sequences in the training and test sets are reordered accordingly.

\begin{figure*}[t]
    \centering
    \newlength{\mylengthb}
    \setlength{\mylengthb}{\dimexpr\textwidth*100/410}
    \begin{minipage}[t]{\mylengthb}
        \centering
        \includegraphics[width=\linewidth]{Figures/visualization/Tanks_and_Temples_Train_00174to00175/00174_gt.png}

        \vspace{2.35pt}
        \includegraphics[width=\linewidth]{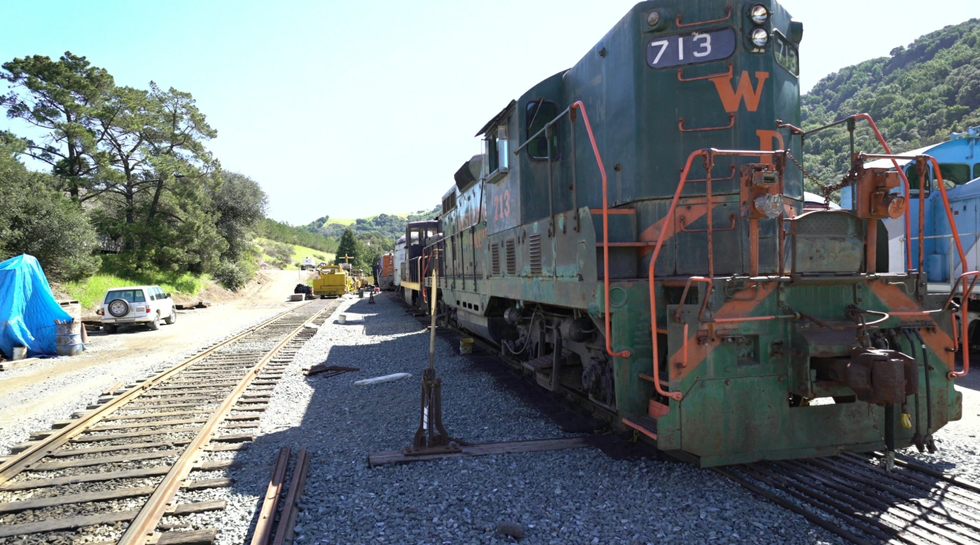}

        \vspace{2.35pt}
        \includegraphics[width=\linewidth]{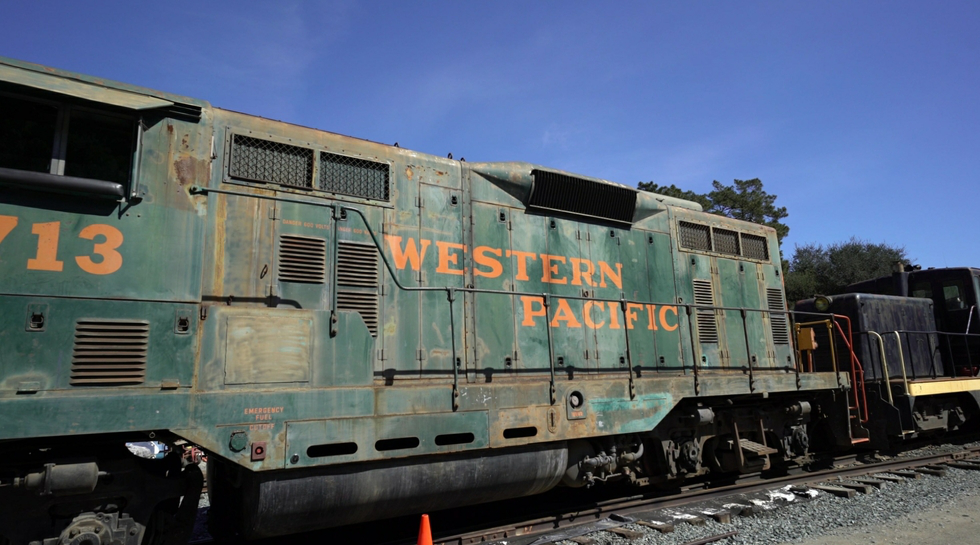}
        
        \vspace{-3pt}
    \centerline{\footnotesize{Reference View}}
    \end{minipage}%
    \hfill
    \begin{minipage}[t]{\mylengthb}
        \centering
        \includegraphics[width=\linewidth]{Figures/visualization/Tanks_and_Temples_Train_00174to00175/00175_gt_draw.png}

        \vspace{2.35pt}
        \includegraphics[width=\linewidth]{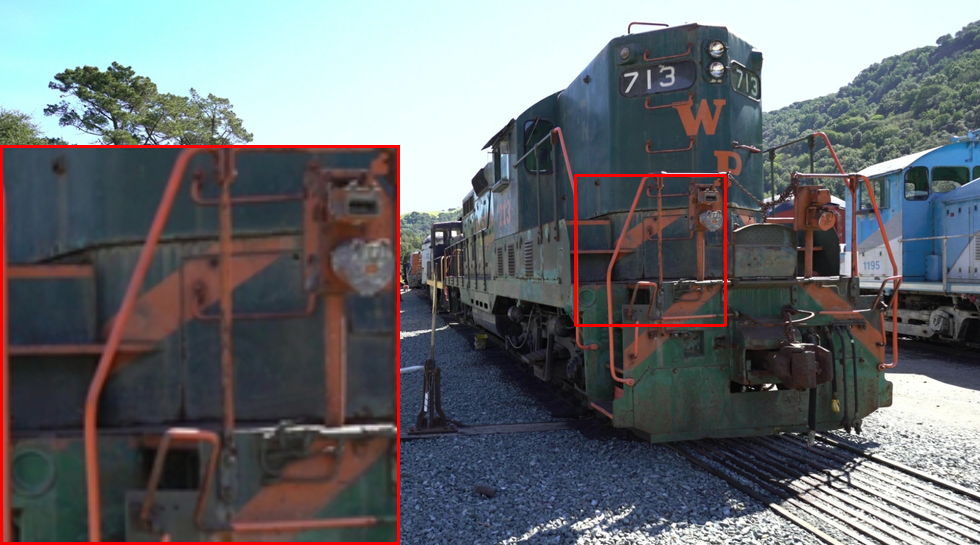}

        \vspace{2.35pt}
        \includegraphics[width=\linewidth]{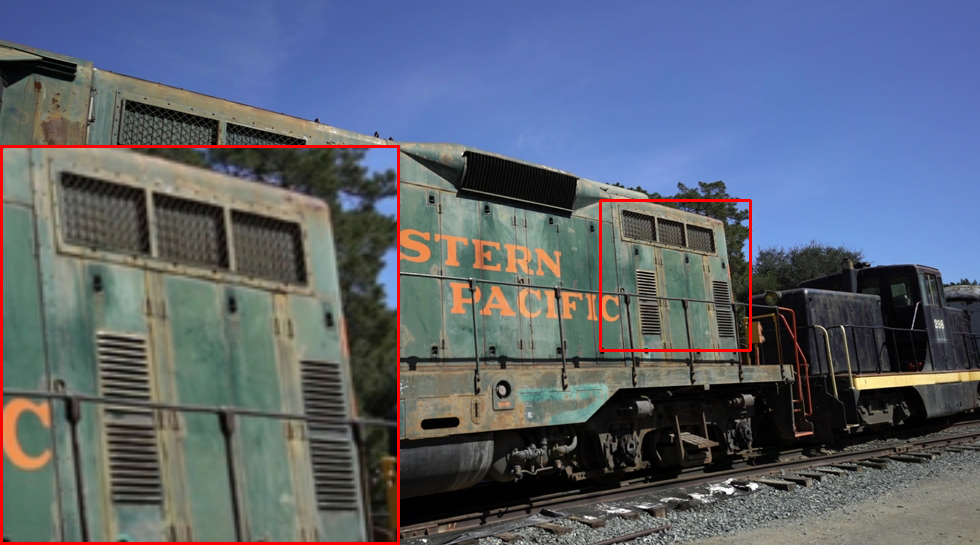}
        
        \vspace{-3pt}
    \centerline{\footnotesize{Target View}}
    \end{minipage}%
    \hfill
    \begin{minipage}[t]{\mylengthb}
	\centering
	\includegraphics[width=\linewidth]         {Figures/visualization/Tanks_and_Temples_Train_00174to00175/00174to00175_warp_image_draw.png}
            
        \vspace{2.35pt}
        \includegraphics[width=\linewidth]{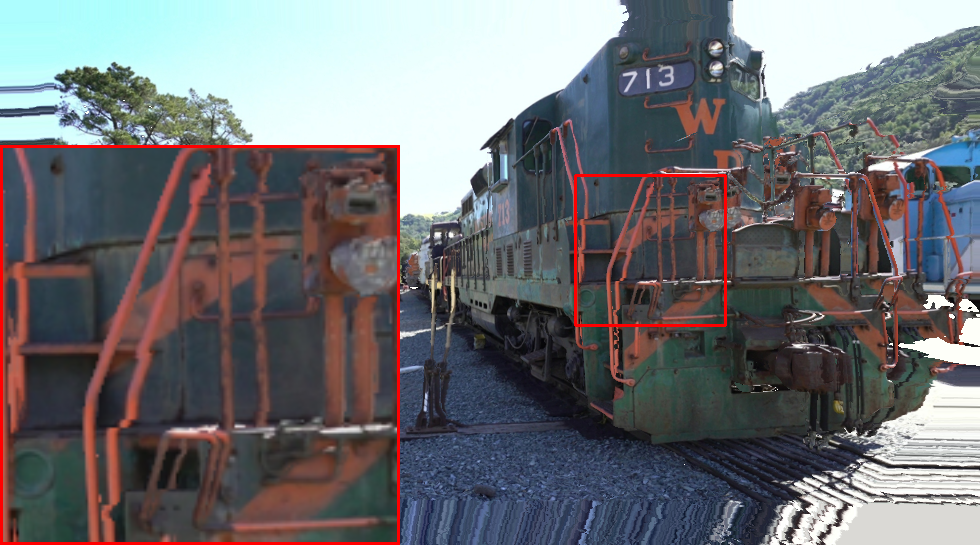}

        \vspace{2.35pt}
        \includegraphics[width=\linewidth]{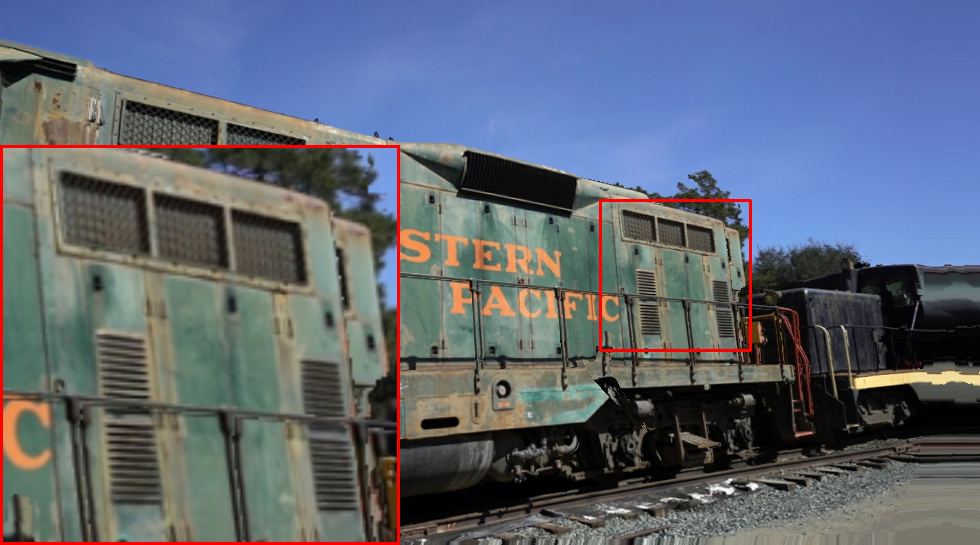}
        
        \vspace{-3pt}
        \centerline{\footnotesize{Proposed}}
    \end{minipage}%
    \hfill
    \begin{minipage}[t]{\mylengthb}
	\centering
	\includegraphics[width=\linewidth]{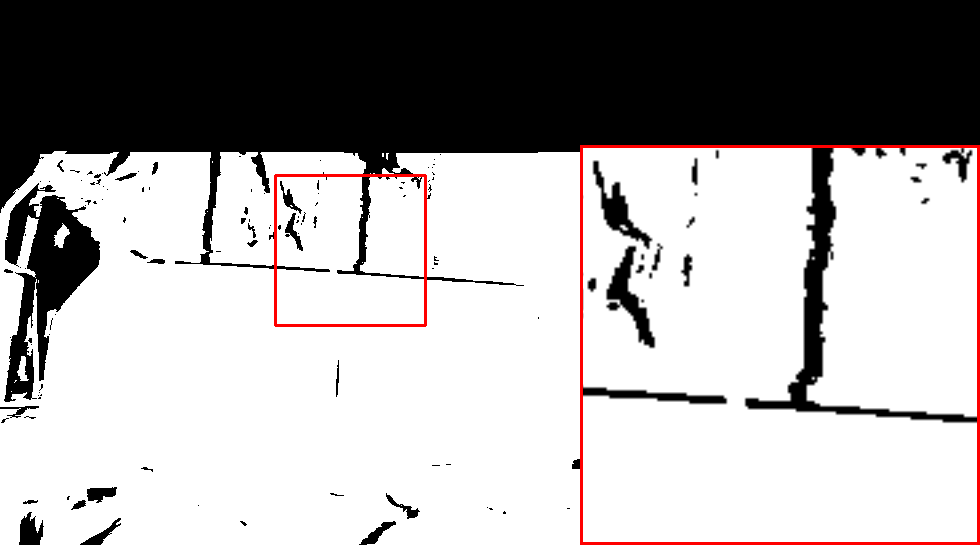}
            
        \vspace{2.35pt}
        \includegraphics[width=\linewidth]{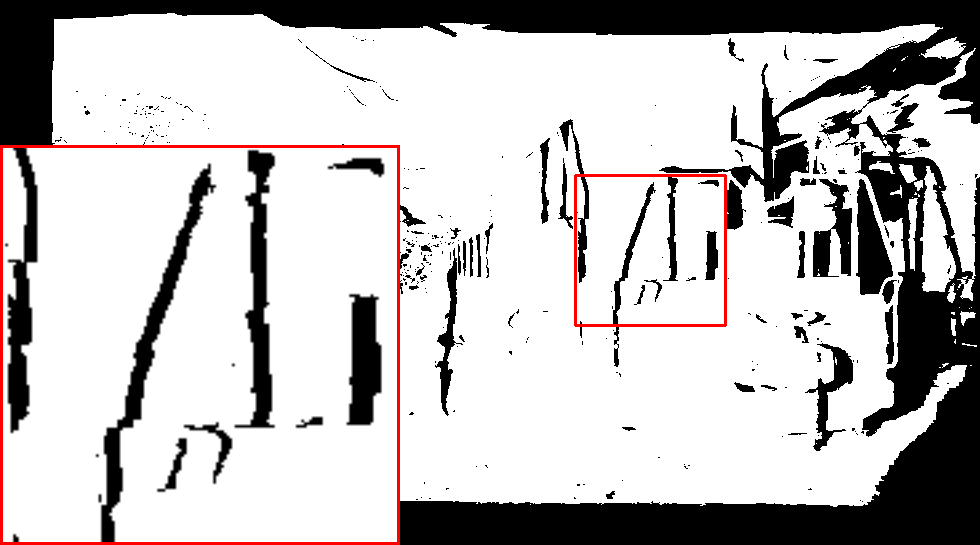}

        \vspace{2.35pt}
        \includegraphics[width=\linewidth]{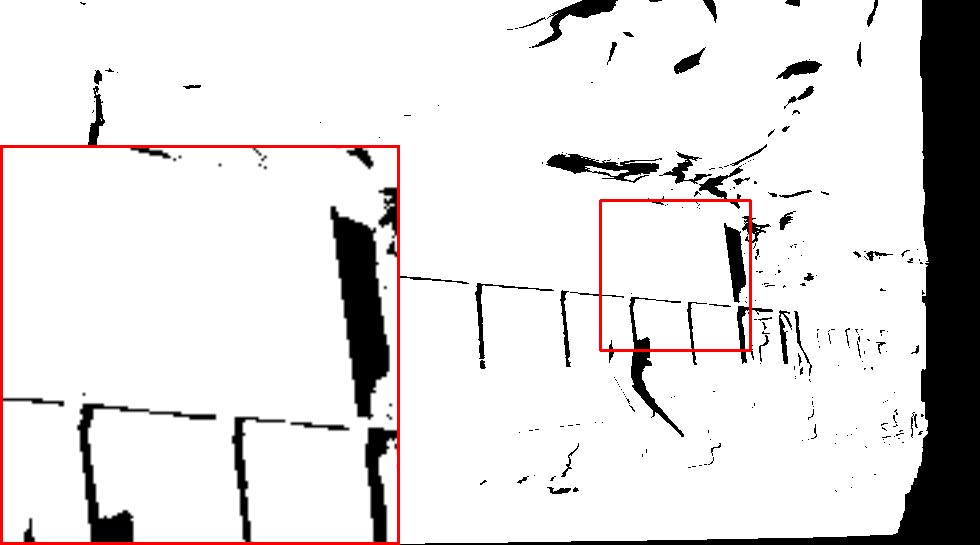}
            
        \vspace{-3pt}
        \centerline{\footnotesize{Mask}}
    \end{minipage}%
    \caption{Visual examples of proposed alignment method and the mask from \eqref{eq:image_mask}.
    }
    \label{fig:visual_examples_of_3D_GP_A_and_the_mask}
    \vspace{-0.4cm}
\end{figure*}

\section{Supplementary Alignment Experiments}
\label{appendix:supp_align_experiments}
\figureautorefname~\ref{fig:visual_examples_of_3D_GP_A_and_the_mask} shows visual examples of proposed alignment method along with the corresponding masks. Notably, ghosting artifacts due to occlusion, such as those involving the iron bars and the edge of the train shell, are effectively identified by the mask, aiding the codec in filtering out irrelevant information when merging features from the reference view.


\begin{figure}[thb]
    \centering
    \newlength{\mylength}
    \setlength{\mylength}{\dimexpr\textwidth*100/410}
    \begin{minipage}[t]{\mylength}
	\centering
        \vspace{3.6pt}
	\includegraphics[width=\linewidth]{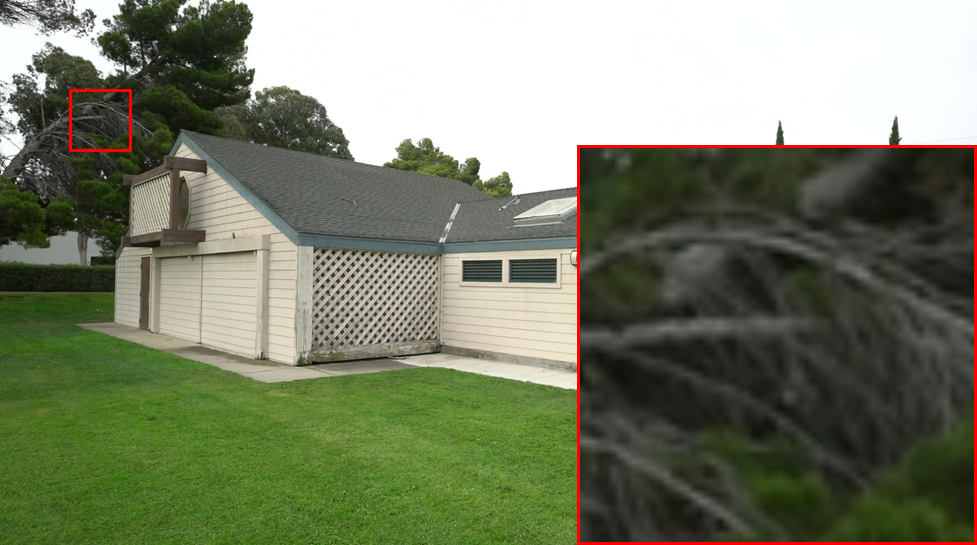}
    
        \vspace{-3pt}
	\centerline{\footnotesize{\ }}	
        \vspace{4.2pt}
        \includegraphics[width=\linewidth]{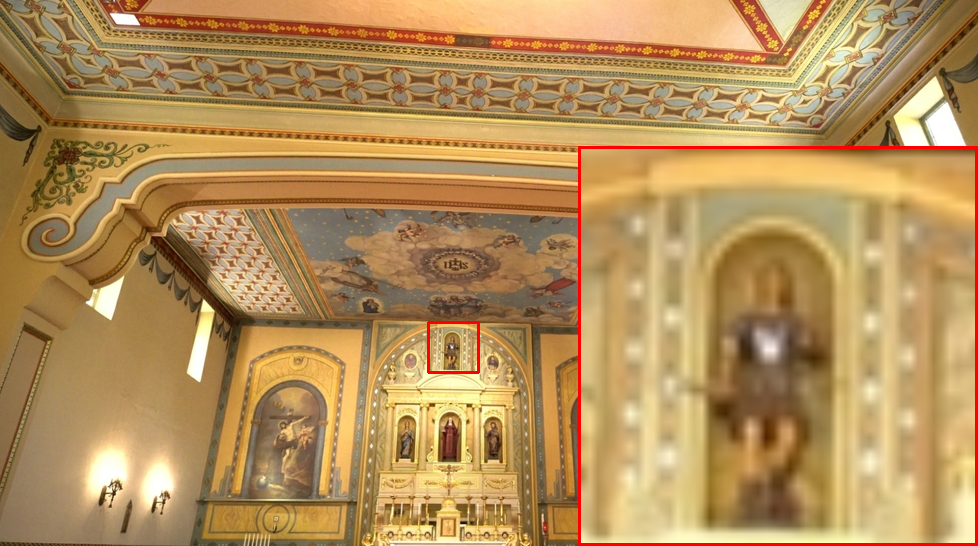}
        
        \vspace{-3pt}
	\centerline{\footnotesize{\ }}	
        \vspace{4.2pt}
        \includegraphics[width=\linewidth]{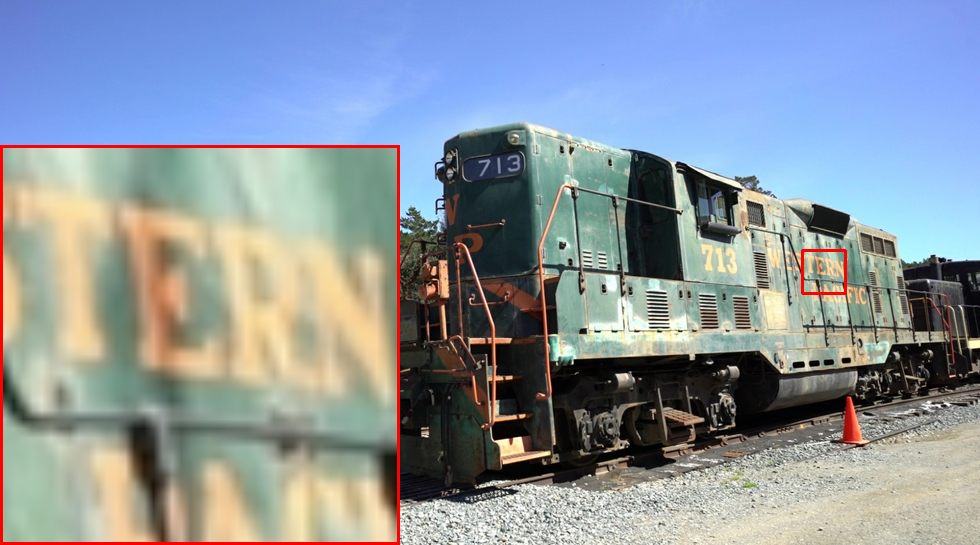}
        
        \vspace{-3pt}
	\centerline{\footnotesize{\ }}	
        \vspace{4.2pt}
        \includegraphics[width=\linewidth]{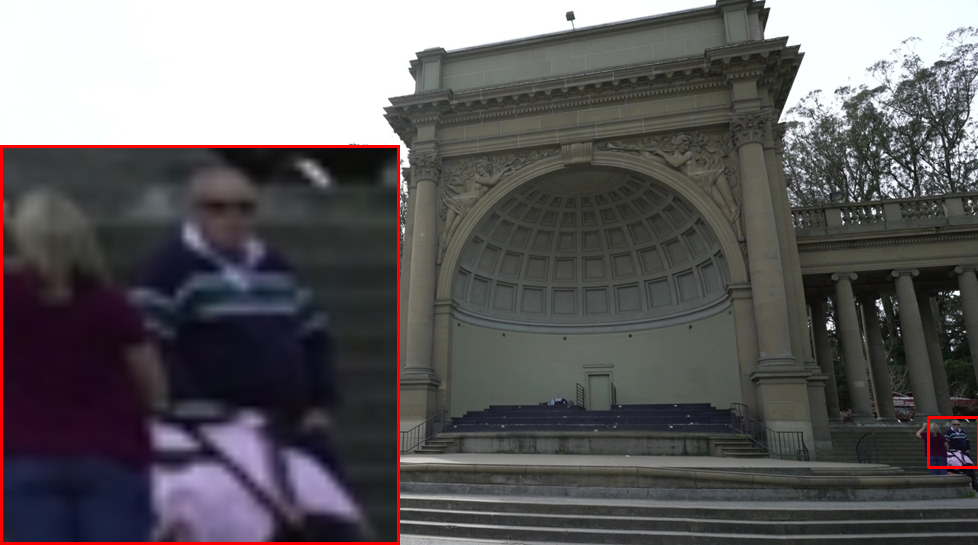}
        
        \vspace{-3pt}
	\centerline{\footnotesize{\ }}	

	\centerline{\footnotesize{Ground truth}}
    \end{minipage}%
    \hfill
    \begin{minipage}[t]{\mylength}
	\centering
        \vspace{3.6pt}
	\includegraphics[width=\linewidth]{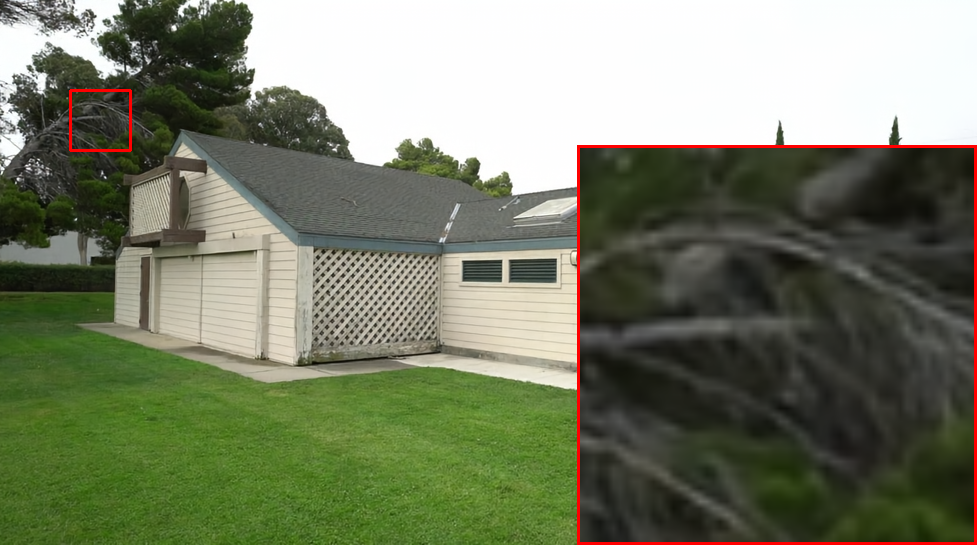}
    
        \vspace{-3pt}
        \centerline{\footnotesize{0.9482/36.88/0.9900}}
        \vspace{4pt}
        \includegraphics[width=\linewidth]{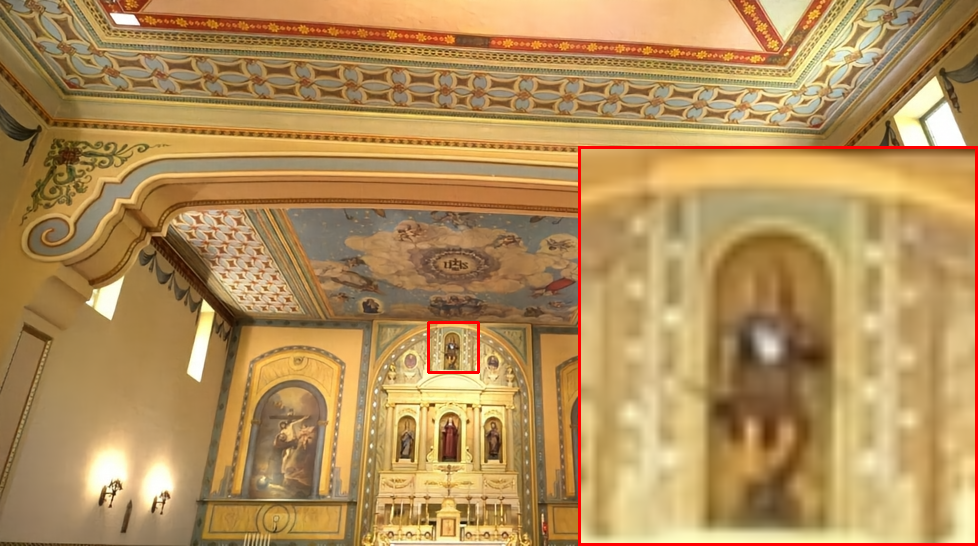}
        
        \vspace{-3pt}
        \centerline{\footnotesize{1.1425/36.06/0.9917}}
        \vspace{4pt}
        \includegraphics[width=\linewidth]{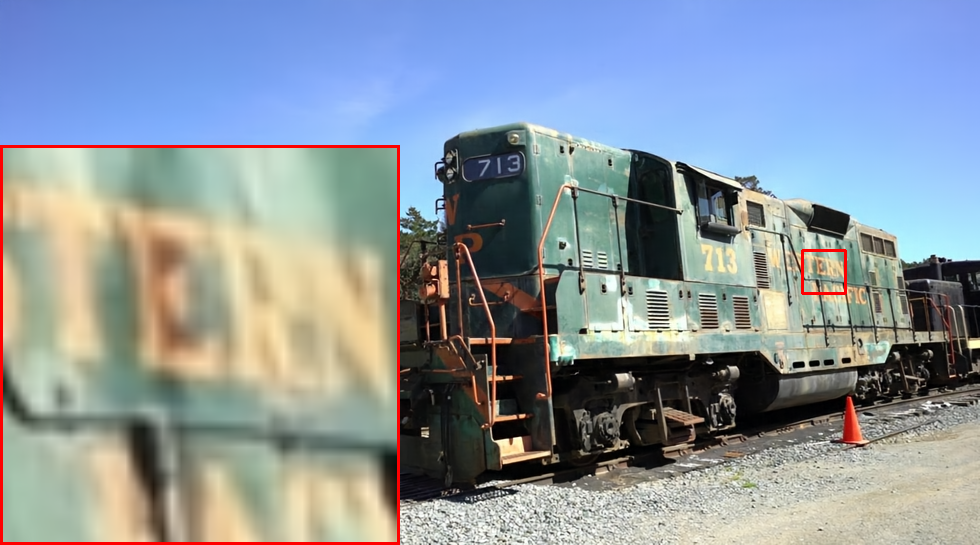}
        
        \vspace{-3pt}
        \centerline{\footnotesize{1.1475/34.95/0.9938}}
        \vspace{4pt}
        \includegraphics[width=\linewidth]{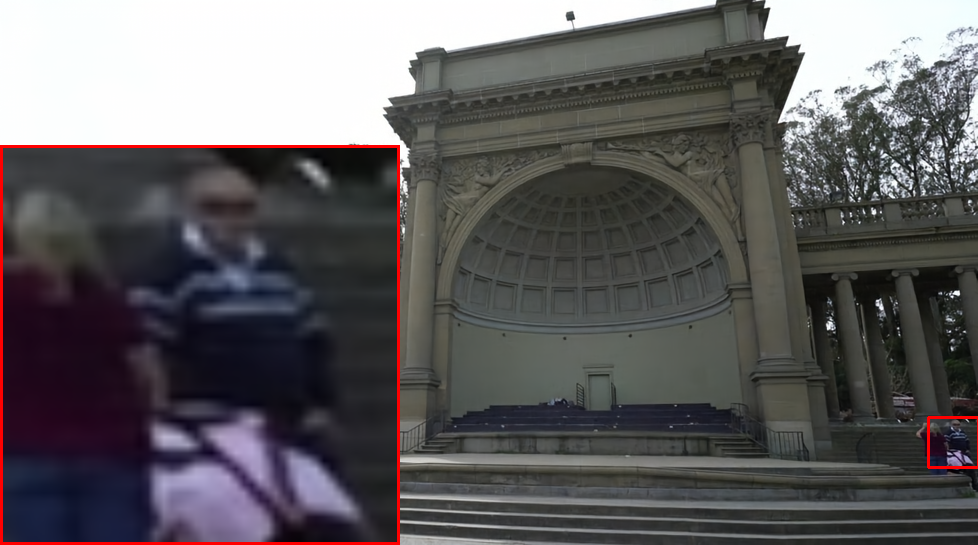}
        
        \vspace{-3pt}
        \centerline{\footnotesize{0.8110/37.85/0.9952}}

        \centerline{\footnotesize{LDMIC}}
    \end{minipage}%
    \hfill
    \begin{minipage}[t]{\mylength}
		\centering
        \vspace{3.6pt}
		\includegraphics[width=\linewidth]{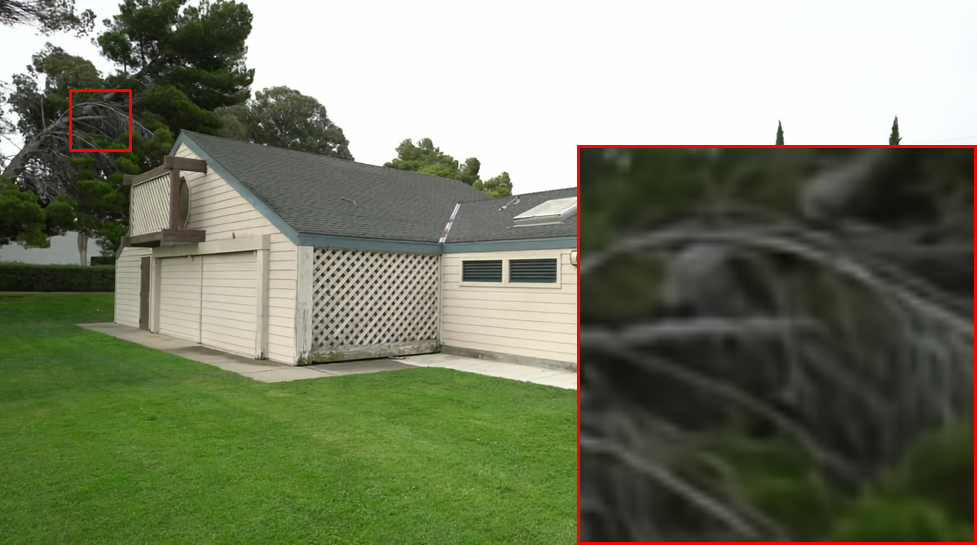}
            
        \vspace{-3pt}
        \centerline{\footnotesize{0.7685/36.18/0.9887}}
        \vspace{4pt}
        \includegraphics[width=\linewidth]{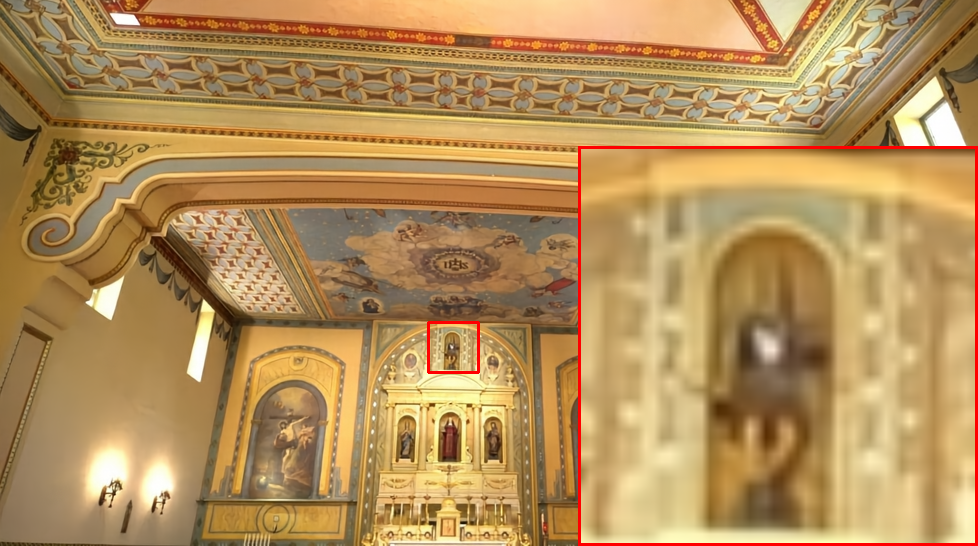}
        
        \vspace{-3pt}
        \centerline{\footnotesize{0.8934/35.34/0.9913}}
        \vspace{4pt}
        \includegraphics[width=\linewidth]{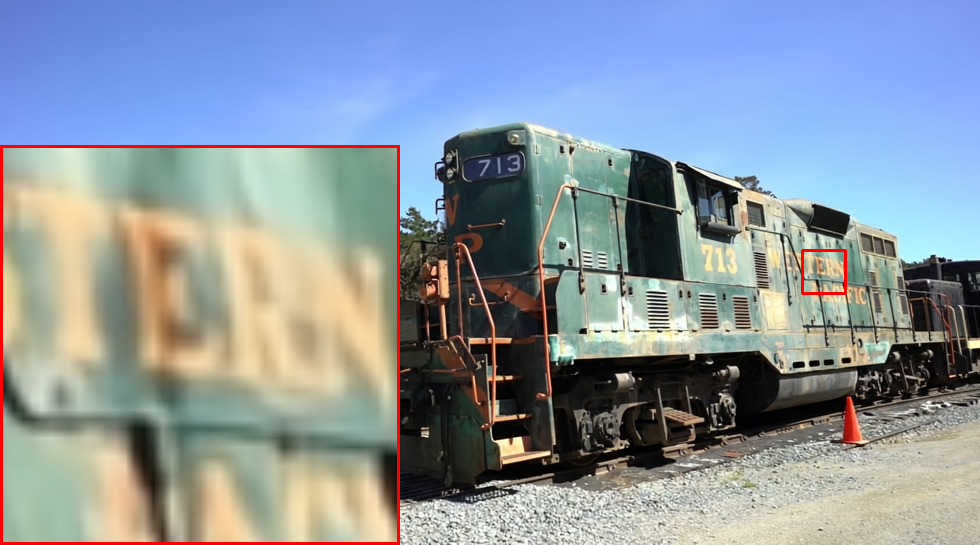}
        
        \vspace{-3pt}
        \centerline{\footnotesize{0.9431/33.52/0.9927}}
        \vspace{4pt}
        \includegraphics[width=\linewidth]{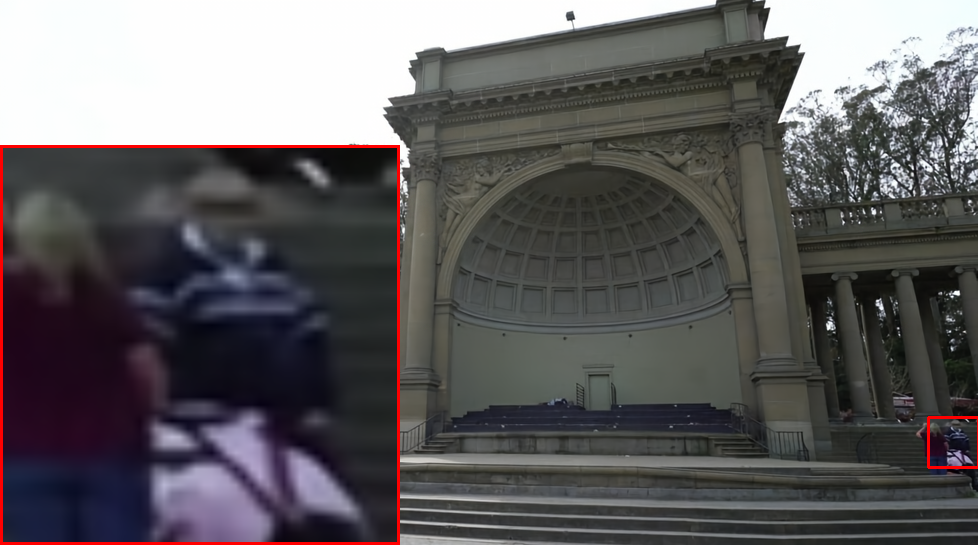}
        
        \vspace{-3pt}
        \centerline{\footnotesize{0.6574/36.75/0.9950}}

        \centerline{\footnotesize{BiSIC}}
    \end{minipage}%
    \hfill
    \begin{minipage}[t]{\mylength}
	\centering
        \vspace{3.6pt}
	\includegraphics[width=\linewidth]{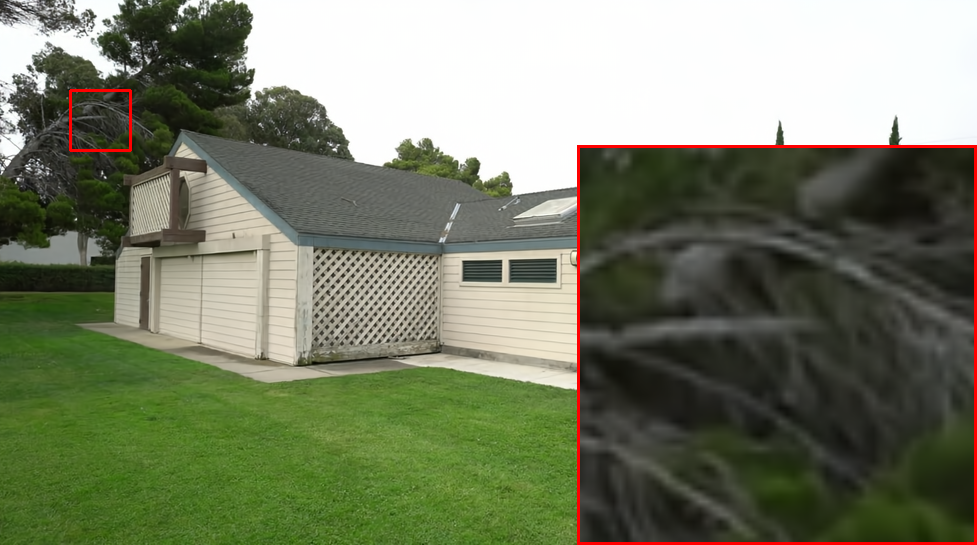}
            
        \vspace{-3pt}
	\centerline{\footnotesize{0.6050/37.73/0.9908}}
        \vspace{4pt}
        \includegraphics[width=\linewidth]{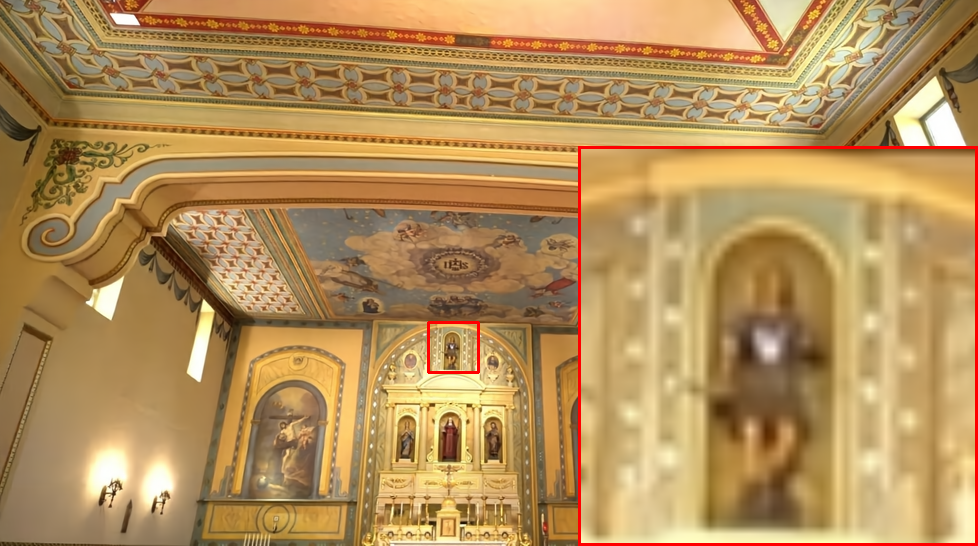}
            
        \vspace{-3pt}
	\centerline{\footnotesize{0.6545/37.14/0.9924}}
        \vspace{4pt}
        \includegraphics[width=\linewidth]{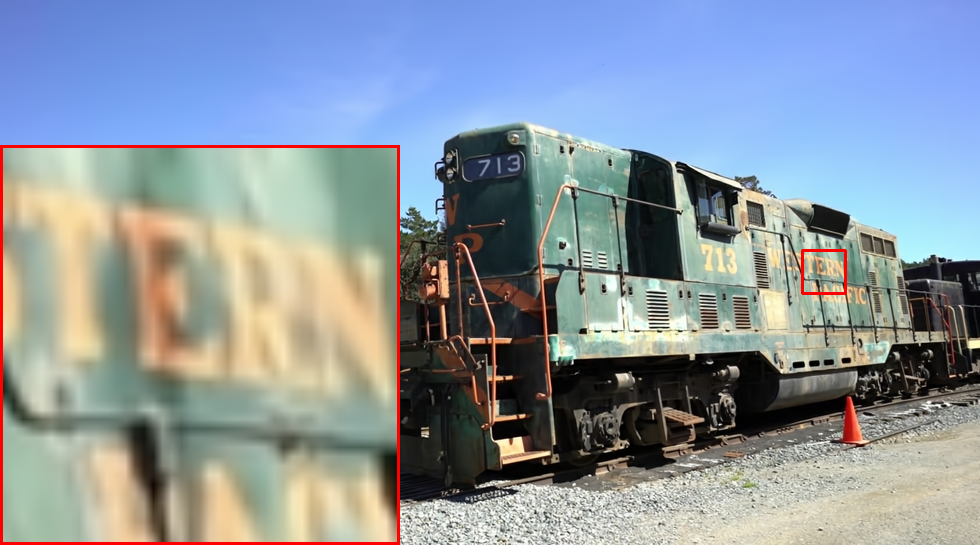}
            
        \vspace{-3pt}
	\centerline{\footnotesize{0.7596/36.72/0.9946}}
        \vspace{4pt}
        \includegraphics[width=\linewidth]{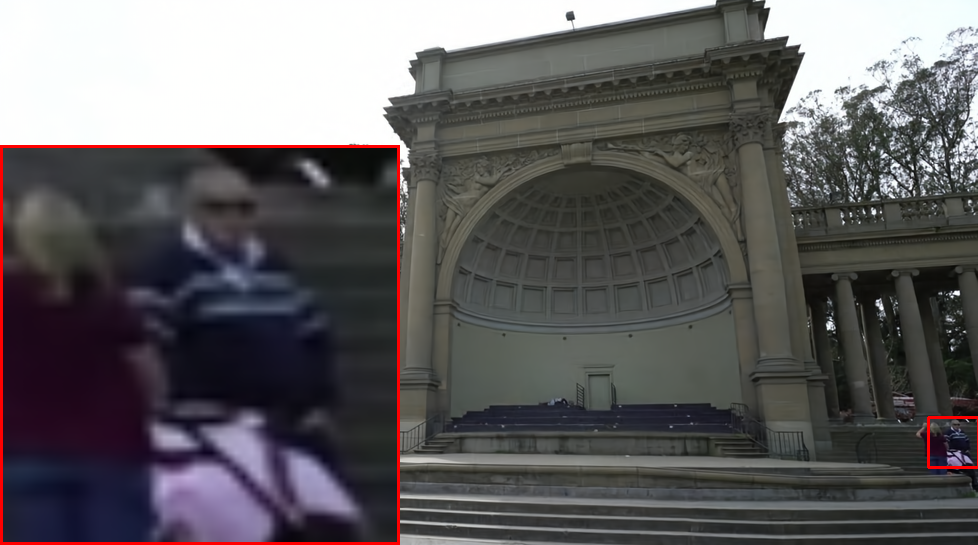}
            
        \vspace{-3pt}
	\centerline{\footnotesize{0.6063/38.89/0.9950}}

        \centerline{\footnotesize{3D-LMVIC}}
    \end{minipage}%
    \caption{Visual Comparison of LDMIC, BiSIC, and 3D-LMVIC on the Tanks\&Temples Dataset. Compression performance is reported as bpp/PSNR/MS-SSIM.}
    \label{fig:visual_comparison_of_diff_methods_on_TnT}
    \vspace{-0.4cm}
\end{figure}

\section{Visualization}
\label{appendix:visualization}
In \cref{fig:visual_comparison_of_diff_methods_on_TnT}, we present examples from the Tanks\&Temples dataset to visually compare the performance of LDMIC, BiSIC, and 3D-LMVIC. The results demonstrate that 3D-LMVIC preserves more texture details and achieves higher reconstruction quality for elements like branches, humans, and text, while consuming fewer bits.

\begin{table}[t]
  \caption{Complexity of learning-based image codecs evaluated on images with the resolution as 978×546 in the Tanks\&Temples dataset.}
  \label{table:complexity}
  \centering
  \resizebox{\linewidth}{!}{%
  \begin{tabular}{*{8}{c}}
    \toprule
    Codecs & MACs Enc. & MACs Dec. & Params Enc. & Params Dec. & Time Enc. & Time Dec. & Memory \\
    \midrule
    HESIC+ & 48.16G	& 134.31G & 17.18M & 15.1M & 4.35s & 10.73s & 2248M \\
    MASIC & 65.62G & 511.34G & 32.03M & 30.73M & 4.38s & 10.78s & 5202M \\
    SASIC & 91.80G & 438.09G & 3.57M & 4.44M & 0.06s & 0.09s & 4498M \\
    LDMIC-Fast & 37.49G & 94.43G & 7.73M & 11.15M & 0.11s & 0.09s & 1168M \\
    LDMIC & 30.91G & 87.84G & 7.73M & 11.15M & 4.24s & 10.63s & 1096M \\
    BiSIC-Fast & 1880G (Enc.+Dec.) & & 85.9M (Enc.+Dec.) & & - & - & 3552M \\
    BiSIC & 1770G (Enc.+Dec.) & & 78.21M (Enc.+Dec.) & & - & - & 3006M \\
    3D-LMVIC & 479.43G & 436.16G & 41.92M & 36.87M & 0.19s & 0.18s & 3164M \\
    \bottomrule
  \end{tabular}
  }
\end{table}

\section{Complexity Analysis}
\label{appendix:complexity}
\tableautorefname~\ref{table:complexity} summarizes the Multiply-Accumulate Operations (MACs), model parameters, coding speed, and memory usage of eight learning-based image codecs. These evaluations were conducted on a platform equipped with an Intel(R) Xeon(R) Gold 6330 CPU @ 2.00GHz and an NVIDIA RTX A6000 GPU. The neural network components were executed on the GPU, while entropy coding was performed on the CPU.

Due to the absence of separate encoder and decoder implementations in the open-source code of BiSIC, we measured only its overall computational complexity. The proposed 3D-LMVIC demonstrates computational complexity within an acceptable range, comparable to the SOTA BiSIC and slightly better than BiSIC-Fast. Specifically, 3D-LMVIC achieved encoding and decoding times of 0.19s and 0.18s, respectively, ranking it among the faster methods.

While the MACs of the 3D-LMVIC encoder are relatively high, they remain lower than those of BiSIC, which employs a symmetric encoder-decoder structure. For BiSIC, we estimate that the MACs for its encoder and decoder each account for approximately half of the total MACs. Additionally, the inclusion of a depth map codec in 3D-LMVIC contributes to the higher MACs and model parameter count.

\begin{table}[t]
  \caption{BDBR of 3D-LMVIC relative to HEVC.}
  \label{table:bdbr_performance_3D_GP_LMVIC_vs_HEVC}
  \centering
  \begin{tabular}{*{7}{c}}
    \toprule
    \multirow{2}*{Methods} & \multicolumn{2}{c}{Tanks\&Temples} & \multicolumn{2}{c}{Mip-NeRF\ 360} & \multicolumn{2}{c}{Deep Blending}  \\
    \cmidrule(lr){2-3} \cmidrule(lr){4-5} \cmidrule(lr){6-7}& PSNR & MS-SSIM & PSNR & MS-SSIM & PSNR & MS-SSIM\\
    \midrule
    3D-LMVIC  & -20.69\% & -40.75\% & -14.48\% & -22.06\% & -17.29\% & -43.06\% \\
    \bottomrule
  \end{tabular}
  \vspace{-0.4cm}
\end{table}

\section{Supplementary Coding Performance}
\label{appendix:supplementary_coding_performance}
We present a supplementary comparison of the coding performance between the proposed 3D-LMVIC and the HEVC video coding standard. The multi-view sequences are treated as a single video and compressed using HEVC with the \textit{lowdelay\_P} configuration and YUV444 input format. HEVC’s coding efficiency is evaluated using the HM-18.0 software\footnote{https://vcgit.hhi.fraunhofer.de/jvet/HM/-/tags}. \tableautorefname~\ref{table:bdbr_performance_3D_GP_LMVIC_vs_HEVC} reports the BDBR of 3D-LMVIC relative to HEVC. On the three datasets, 3D-LMVIC consistently surpasses HEVC in both PSNR and MS-SSIM, demonstrating its effectiveness in reducing inter-view redundancy in multi-view sequences.


\end{document}